%% file: neurips_2026.tex
\documentclass{article}

\PassOptionsToPackage{numbers, compress}{natbib}
\usepackage[preprint]{neurips_2026}

\usepackage[utf8]{inputenc} % allow utf-8 input
\usepackage[T1]{fontenc}    % use 8-bit T1 fonts
\usepackage{hyperref}       % hyperlinks
\usepackage{url}            % simple URL typesetting
\usepackage{booktabs}       % professional-quality tables
\usepackage{amsfonts}       % blackboard math symbols
\usepackage{nicefrac}       % compact symbols for 1/2, etc.
\usepackage{microtype}      % microtypography
\usepackage{xcolor}         % colors

\usepackage{microtype}
\usepackage{caption}
\usepackage{graphicx}
\usepackage{subcaption}
\usepackage{booktabs} % for professional tables
\usepackage{adjustbox}
\usepackage{amsmath}
\usepackage[table]{xcolor}
\usepackage{xcolor}
\usepackage{placeins}
\usepackage{makecell}
\usepackage{booktabs}
\usepackage{multirow}
\usepackage{hyperref}
\usepackage{enumitem}

\definecolor{orange}{HTML}{FFAE7A}
\definecolor{green}{HTML}{A8CF8F}
\definecolor{blue}{HTML}{A9B2E6}
\definecolor{blue1}{HTML}{0018F9}

\newcommand{\ie}{\textit{i.e., }}
\newcommand{\eg}{\textit{e.g., }}

\newcommand{\oursmetric}{$d_{\text{NTP}}$}
\newcommand{\oursmethod}{\textsl{LTV}}
\setlist[itemize]{
  leftmargin=1em,
  itemsep=1pt,
  topsep=1pt,
  parsep=1pt,
  partopsep=1pt
}

\usepackage{amsmath}
\usepackage{amssymb}
\usepackage{mathtools}
\usepackage{amsthm}
\usepackage{bm}
\usepackage{array}

\usepackage[capitalize,noabbrev]{cleveref}

\theoremstyle{plain}
\newtheorem{theorem}{Theorem}[section]
\newtheorem{proposition}[theorem]{Proposition}

\theoremstyle{definition}

\theoremstyle{remark}

\usepackage[textsize=tiny]{todonotes}

\AtBeginDocument{
  \setlength{\abovedisplayskip}{3pt plus 2pt minus 3pt}
  \setlength{\belowdisplayskip}{3pt plus 2pt minus 3pt}
  \setlength{\abovedisplayshortskip}{3pt plus 2pt minus 3pt}
  \setlength{\belowdisplayshortskip}{3pt plus 2pt minus 3pt}
}
\input{math_commands}

\title{Distributional Alignment as a Criterion for \\ Designing Task Vectors in In-Context Learning}

\author{%
  Jihoon Kwon\thanks{Equal Contribution $^\dagger$Corresponding author} \\
  Seoul National University \\
  \texttt{kog0712@snu.ac.kr}
  \And
  Jiwon Choi$^*$ \\
  Yonsei University \\
  \texttt{jiii111@yonsei.ac.kr}
  \And
  Jy-yong Sohn$^{\dagger}$ \\
  Yonsei University \\
  \texttt{jysohn1108@yonsei.ac.kr}
}

\begin{document}

\maketitle

\input{main}

\bibliography{main}
\bibliographystyle{plainnat}

%%%%%%%%%%%%%%%%%%%%%%%%%%%%%%%%%%%%%%%%%%%%%%%%%%%%%%%%%%%%
\newpage
\input{appendix}
%%%%%%%%%%%%%%%%%%%%%%%%%%%%%%%%%%%%%%%%%%%%%%%%%%%%%%%%%%%%
\end{document}

%% file: math_commands.tex
\def\eqref#1{equation~\ref{#1}}
\def\1{\bm{1}}

\def\vc{{\bm{c}}}

\def\vh{{\bm{h}}}

\def\vp{{\bm{p}}}
\def\vq{{\bm{q}}}

\def\vv{{\bm{v}}}
\def\vw{{\bm{w}}}
\def\vx{{\bm{x}}}
\def\vy{{\bm{y}}}
\def\vz{{\bm{z}}}

\def\mH{{\bm{H}}}
\def\mI{{\bm{I}}}

\def\mV{{\bm{V}}}
\def\mW{{\bm{W}}}

\def\mY{{\bm{Y}}}

\DeclareMathAlphabet{\mathsfit}{\encodingdefault}{\sfdefault}{m}{sl}
\SetMathAlphabet{\mathsfit}{bold}{\encodingdefault}{\sfdefault}{bx}{n}

\newcommand{\softmax}{\mathrm{softmax}}

\newcommand{\KL}{D_{\mathrm{KL}}}

%% file: main.tex
\vspace{-20pt}
\begin{abstract}
In-context learning (ICL) allows large language models (LLMs) to adapt to new tasks through demonstrations, yet it suffers from escalating inference costs as context length increases.
While task vectors offer a promising alternative by compressing demonstrations into compact hidden-state representations, their quality has been evaluated only through downstream task accuracy.
This indirect criterion provides limited insight into how to design more effective task vector extraction methods.
In this paper, we posit that inference using task vectors should align their predictive distribution with that of ICL.
To quantify this, we introduce $d_{\text{NTP}}$, a metric that measures the discrepancy in next-token probabilities between task vector-based and ICL-based inference.
Our empirical analysis reveals that $d_{\text{NTP}}$ serves as a performance proxy, exhibiting a strong negative correlation with downstream accuracy.
Motivated by this, we develop Linear Task Vector (LTV), a method designed to minimize $d_{\text{NTP}}$ via a closed-form linear mapping that estimates demonstration effects through regression.
Across eight classification benchmarks and five LLMs, LTV consistently outperforms existing task vector baselines, improving average accuracy by 9.2\% while reducing inference latency.
We further show that LTV outperforms the baselines on regression tasks.
Moreover, we investigate the transferability of LTV across different model scales; an aspect that has remained nascent in task vector research. Specifically, we empirically show that task vectors from a larger model can enhance a smaller model's performance by 6.4\%, suggesting a new utility for extracted task representations.
\end{abstract}

\section{Introduction}
\label{sec:intro}

In-Context Learning (ICL) has emerged as a powerful paradigm for adapting large language models (LLMs) to new tasks by simply prepending labeled demonstrations before the query~\citep{brown2020language, wei2022emergent}.
ICL has been shown to achieve impressive performance gain across various tasks without requiring any model parameter updates, with performance typically improving as more demonstrations are provided~\citep{agarwal2024many, dong2024survey}.
However, this improvement comes at a cost: longer demonstrations either increase inference-time computation as the input length grows, or incur memory overhead when caching the activations~\citep{mu2023learning, li2024implicit, gao2025optimization, golchintowards}.
These computational and memory limitations hinder the practical utility of ICL under resource constraints.

To address these limitations, recent work has proposed \emph{task vectors in ICL} as a training-free approach that enables task adaptation without directly using demonstrations at inference time~\citep{hendel2023context, todd2023function, liu2023context, li2024context}.
In this approach, a task vector (TV) refers to a condensed vector extracted from the internal activations of an LLM when performing ICL, encapsulating the task information that the LLM implicitly infers from demonstrations~\citep{han2024emergence, yang2025task, dong2025understanding}.
By applying this vector to zero-shot inference, where no demonstrations are provided, the model achieves substantial performance gains on the task~\citep{hendel2023context}.
While various TV extraction methods have been proposed in the past years, the downstream task performance remains the only established criterion 
for comparing them, offering limited insight into \emph{why} 
one method outperforms another and \emph{how} to improve the existing extraction methods.

In this paper, we posit that distributional alignment with ICL is a desirable property of task vectors, and a useful criterion for designing TV extraction methods.
This perspective is motivated by the following idea: since the role of a task vector is to condense the effect of demonstrations, TV-based inference should naturally produce a predictive distribution that closely aligns with that of ICL.

From this perspective, we make the following contributions:
\begin{itemize}
    \item We propose \oursmetric{}, a metric that quantifies the quality of TV methods, by measuring the \emph{discrepancy} between the predictive distribution under TV-based inference and that under ICL-based inference, in terms of \emph{next-token probability (NTP)}.
    We empirically show that \oursmetric{} has a strong negative correlation with the downstream performance, serving as an indicator of the quality of TV.
    \item We develop the Linear Task Vector (\oursmethod{}) method, which is designed to reduce \oursmetric{}. 
    Specifically, \oursmethod{} employs a linear mapping that estimates the effect of demonstrations, and uses the closed-form solution of a regression problem to extract task vectors.
    \item In experiments, \oursmethod{} consistently outperforms existing TV methods on eight classification benchmarks and five LLMs, improving average accuracy by 9.2\% while reducing inference latency.
    Furthermore, \oursmethod{} outperforms the baselines on regression tasks.
    Finally, we extend our study to the transferability of task vectors, a dimension largely unexplored in existing research. By applying task vectors extracted from a larger model to smaller ones -- which may have limited capacity or context lengths -- we achieve a 6.4\% improvement in classification accuracy.
\end{itemize}

\section{Related Work}

\paragraph{Task Vectors.}
A pioneering work~\citet{ilharco2022editing} introduces the concept of task vectors, defined as the difference in parameter space between a pre-trained model and the model fine-tuned for a specific task.
The core idea is that shifting the weights of a model in the task vector direction improves performance on that task~\citep{ortiz2023task, zhang2024knowledge, li2025task}.
Recent works have shown that task vectors can also be extracted from representation spaces, such as activation spaces~\citep{hendel2023context, yang2025task} or soft prompt spaces~\citep{belanec2025task}.

\vspace{-2mm}
\paragraph{In-Context Learning.}
ICL enables LLMs to adapt to new tasks by simply prepending query-label pairs as demonstrations to the model input~\citep{brown2020language}.
The success of ICL has motivated various theoretical interpretations~\citep{zhou2024mystery, von2023transformers, ahn2023transformers, olsson2022context,yin2025attention, akyurek2022learning, li2023transformers}.
A notable line of work interprets ICL as implicit Bayesian inference~\citep{xie2021explanation, panwar2023context, zhang2023and}: as the model processes demonstrations, it implicitly infers a \emph{latent task concept} from demonstrations and conditions its predictions on the resulting posterior.
This view provides the theoretical foundation for task vectors in ICL~\citep{mittaldoes}, a line of work we describe in detail below.

\vspace{-2mm}
\paragraph{Task Vectors in ICL.}
A key limitation of ICL is that it incurs substantial inference-time computation and memory overhead as the input length increases~\citep{li2024implicit, agarwal2024many}.
To reduce these overheads, recent works aim to internalize the task adaptation induced by ICL, by adjusting the model parameters or activations so that the effects of demonstrations are encoded within the model itself.
One line of work achieves this through few-shot parameter-efficient fine-tuning (PEFT)~\citep{li2024implicit, jukic2024disentangling, gao2025optimization, li2025towards}.
Another line of work explores task vectors in ICL, offering a training-free alternative that enables task adaptation.

Numerous methods have been proposed for extracting task vectors in ICL, demonstrating performance gains over zero-shot inference~\citep{hendel2023context, todd2023function, liu2023context, li2024context, li2024implicit, kang2025adaptive, wang2024elicit}.
As the model activations span multiple modules, existing methods vary widely in where and how task vectors are extracted.
This diversity underscores the need for a direct criterion to evaluate the quality of extracted task vectors.

\section{Backgrounds}
\label{sec:backgrounds}

In this section, we first review relevant concepts and notations used in our paper. 
Sec.~\ref{sec:model} describes how LLMs predict the next token, Sec.~\ref{sec:task} defines our target classification task, and Sec.~\ref{sec:mode} introduces three inference modes for LLMs -- zero-shot, ICL, and using task vectors.

\subsection{Model: Large Language Models (LLMs)}
\label{sec:model}

We consider pre-trained auto-regressive LLMs which predict the next token $u$ given an input prompt $p$, a sequence of tokens.
The model consists of three components:
\begin{itemize}
    \item the embedding layer that converts each token in the prompt $p$ into an embedding vector,
    \item the transformer~\citep{vaswani2017attention} (TF) decoder consisting of $L$ layers, which transforms the sequence of embedding vectors into a sequence of hidden states,
    \item the language modeling (LM) head that predicts the probability of the next token $u$ based on the output of TF.
\end{itemize}

Let $[\vh_1(p), \vh_2(p), \dots, \vh_l(p)]$ denote the hidden states at the final layer of the TF decoder, where $l$ is the sequence length and $\vh_l(p) \in \mathbb{R}^d$ is a $d$-dimensional vector.
The LM head predicts the next token $u \in \mathcal{U}$ by applying a linear projection to the last hidden state $\vh_l(p)$, where $\mathcal{U} = \{1, 2, \ldots, N_{\mathcal{U}}\}$ is the vocabulary set; each token is represented by its index.
To be specific, the probability of the next token is computed as:
\begin{equation}
    P(u \mid p) = \sigma(\mW_{\mathrm{lm}} \vh_l(p))[u], \quad u \in \mathcal{U},
\end{equation}
where $\mW_{\mathrm{lm}} \in \mathbb{R}^{N_\mathcal{U} \times d}$ denotes the weight matrix of the LM head, and $\sigma(\cdot)$ denotes the softmax function.
For notational simplicity, we hereafter write $\vh(p)$ for $\vh_l(p)$, as we only use the hidden state of the last token.
We also write $\text{TF}(p)$ to denote the hidden state $\vh(p)$ obtained by embedding the prompt $p$ and passing it through TF.

\subsection{Task: Classification}
\label{sec:task}

While large language models (LLMs) can be applied to a wide range of downstream tasks, prior work on task vectors in ICL~\citep{hendel2023context, li2024implicit, saglam2025learning} has primarily focused on classification settings.
Following this line of work, we also restrict our attention to classification tasks.

We define a classification task by a distribution $\mathcal{D}$ over query--label pairs $(x, y)$, where the query $x$ is a text sequence and the label $y$ belongs to a task-specific label set $\mathcal{C} \subseteq \mathcal{U}$ which contains $\lvert \mathcal{C} \rvert = K$ classes.
Given a query $x$, the goal is to predict its corresponding label $y$.
We consider the next-token distribution \emph{restricted} to the label set $\mathcal{C}$:
\begin{equation}
\label{eq:restricted_prob}
    P(c \mid p; \mathcal{C}) = \frac{P(c \mid p)}{\sum_{c^\prime \in \mathcal{C}} P(c^\prime \mid p)}, \quad c \in \mathcal{C}.
\end{equation}
For notational simplicity, we hereafter write $P(c \mid p)$ to denote this label-restricted distribution.
In greedy decoding, the predicted label $\hat{y}$ is determined by selecting the class with the highest probability:
\begin{equation}
    \hat{y} = \mathrm{argmax}_{c \in \mathcal{C}} \, P(c \mid p).
\end{equation}

\input{figure/modes}

\subsection{Methods: Inference Modes for LLMs}
\label{sec:mode}

We introduce three inference modes for LLMs: zero-shot inference in Sec.~\ref{subsubsec:zero}, in-context learning (ICL) in Sec.~\ref{subsubsec:icl}, and task vectors in Sec.~\ref{subsubsec:tv}.

\subsubsection{Zero-shot Inference Mode}
\label{subsubsec:zero}

In the zero-shot inference mode, the LLM predicts $y_{\text{test}}$ for the test query $x_{\text{test}}$, without being provided any labeled examples $(x,y)$ for the target task. 
The leftmost part of Fig.~\ref{fig:modes} shows the detailed process.
First, the test query $x_{\text{test}}$ is passed through the TF to obtain the hidden state $\vh_{\text{zs}} = \text{TF}(x_{\text{test}})$.
Then, the LM head computes the probability $P(c \mid x_{\text{test}})$ for each class $c$ from $\vh_{\text{zs}}$.
Finally, the predicted label is selected via greedy decoding:
\begin{equation}
\label{eq:zero}
    \hat{y}_{\text{zs}} = \mathrm{argmax}_{c \in \mathcal{C}} \, P(c \mid x_{\text{test}}).
\end{equation}
Throughout the paper, we use the subscript `zs' to note that the quantity is for the zero-shot inference.

\subsubsection{In-Context Learning (ICL) Mode}
\label{subsubsec:icl}
Suppose we are given $k$ demonstrations $Z = \{(x_i, y_i)\}_{i=1}^k$ sampled from the task distribution $\mathcal{D}$.
As shown in the middle of Fig.~\ref{fig:modes}, ICL prepends $Z$ to the test query $x_{\text{test}}$ and passes them through the model, which outputs $\vh_{\text{icl}} = \text{TF}([Z|| x_{\text{test}}])$, where $||$ represents the concatenation operator.
The LM head then computes the probability $P(c \mid [Z|| x_{\text{test}}])$ for each class $c$ from $\vh_{\text{icl}}$.
We denote this probability by $P_{\text{icl}}(c \mid x_{\text{test}}, Z)$ to indicate the ICL inference mode.
Lastly, the predicted label is then obtained as
\begin{equation}
\label{eq:icl}
    \hat{y}_{\text{icl}} = \mathrm{argmax}_{c \in \mathcal{C}} P_{\text{icl}}(c \mid x_{\text{test}}, Z).
\end{equation}

\subsubsection{Task Vector (TV) Mode}
\label{subsubsec:tv}

Inference using task vectors is a variant of ICL.
This mode is motivated by the implicit Bayesian view~\citep{xie2021explanation}, which posits that during ICL, the LLM predicts the label $y$ conditioned on a latent task concept $\vv$ inferred from demonstrations $Z$.
Under this view, the predictive distribution computed by the LLM under the ICL can be expressed as
\begin{equation}
\label{eq:bayesian}
    P(y \mid x, Z) = \int_\vv P(y \mid x, \vv) \, P(\vv \mid Z) \, d\vv.
\end{equation}
where $P(\vv \mid Z)$ denotes the posterior over the latent task concept $\vv$ inferred from the $Z$, and $P(y \mid x, \vv)$ denotes the predictive distribution of the label $y$ conditioned on the inferred concept $\vv$.

TV mode explicitly makes use of the decomposition of the predictive distribution as in the right-hand side of~\eqref{eq:bayesian}.
In other words, the task vector mode is composed of two stages: (1) the \emph{extraction} stage, which extracts a task vector $\vv$ from the model activation induced by the demonstrations $Z$, and (2) the \emph{inference} stage, which applies the extracted vector $\vv$ to the model activation, in the zero-shot inference.
Below we formally describe each stage, which is shown in the rightmost column of Fig~\ref{fig:modes}.

\paragraph{Extraction of task vector.} 
Let $f$ denote a task vector extraction function that takes demonstrations $Z$ and query $x$ as input, and outputs a task vector $\vv$.
Formally, we represent the task vector $\vv$ as
\begin{equation}
\label{eq:tv-general}
    \vv = f(x, Z).
\end{equation}
For notational simplicity, we suppress the dependence of $\vv$ on $x$ and $Z$ when it is clear from context.

\paragraph{Inference using task vector.} 

Recall that in the zero-shot inference mode (specified in Sec.~\ref{subsubsec:zero}), TF outputs the hidden state of the last token $\vh_{\text{zs}}$, when the input is set to the test query $x_{\text{test}}$.
In the TV mode, the task vector $\vv$ extracted in~\eqref{eq:tv-general} is used to update\footnote{While our method additively updates the output of TF, there exist other task vector-based methods that combine a model activation and $\vv$ in different ways. We focus on this additive case for notational simplicity.} the hidden state from $\vh_{\text{zs}}$ to task-conditioned hidden state $\vh_{\text{tv}} = \vh_{\text{zs}}+\vv$.
For a given $\vh_{\text{tv}}$, the LM head computes the probability $P(c \mid x_{\text{test}}, \vv)$ for each class $c$; we denote this probability by $P_{\text{tv}}(c \mid x_{\text{test}}, \vv)$ to indicate that this inference mode uses task vectors.
The predicted label is then determined as
\begin{equation}
\label{eq:tv}
    \hat{y}_{\text{tv}} = \mathrm{argmax}_{c \in \mathcal{C}} P_{\text{tv}}(c \mid x_{\text{test}}, \vv).
\end{equation}

\input{figure/metric}

\section{Measuring the Quality of Task Vector}
\label{sec:metric}

We propose a metric that measures the quality of the task vector extracted by $f$.
We first provide the motivation of our approach in Sec.~\ref{subsec:motiv}, and then propose our metric in Sec.~\ref{subsec:def_metric}.
Finally, we show that our metric serves as an indicator of task vector quality in Sec.~\ref{subsec:met_exp}.

\subsection{Motivation}
\label{subsec:motiv}

Recall that the goal of using task vectors is to condense the task 
information inferred during ICL into a compact vector $\vv$, enabling 
predictions similar to ICL without the overhead of processing 
demonstrations.
However, prior work has relied solely on downstream task accuracy 
to evaluate the quality of the task vector.
This evaluation practice offers limited insight into \emph{why} 
one method outperforms another, and provides little guidance on 
\emph{how} to design better task vector methods.

To address this, we propose a metric grounded in the goal of using task vectors: 
\emph{enabling predictions similar to ICL}.
If the task vector successfully captures the task information inferred during ICL, its inference should yield a predictive distribution \emph{aligned} with that of ICL.
We thus suggest to measure the discrepancy between the two distributions, formally defined as below.

\subsection{Proposed Metric}
\label{subsec:def_metric}

Recall that $P_{\text{icl}}$  and $P_{\text{tv}}$ are the probabilities of the next token computed by the model, for the ICL mode and the TV mode, respectively. 
Given demonstrations $Z$, we measure the quality of the task vector extraction method $f$ as the discrepancy of the ICL mode and the TV mode in terms of the next token probability (NTP), denoted by
\begin{equation}
\label{eq:metric}
\text{\oursmetric}(f; Z) = \mathbb{E}_{x\sim\mathcal{D}} \Big[
    \KL\big(
        P_{\text{icl}}(\cdot \mid x, Z)
        \;\|\;
        P_{\text{tv}}(\cdot \mid x, f(Z))
    \big)
\Big],
\end{equation}
where $\KL$ represents the Kullback-Leibler (KL) divergence~\citep{kullback1951information} operator. 
Here, $P_{\text{icl}}$ serves as the reference distribution, and the metric quantifies how far $P_{\text{tv}}$ deviates from the reference.
See Fig.~\ref{fig:metric} for the illustration of our proposed metric.

A lower \oursmetric{} indicates that $P_{\text{tv}}$ is more closely aligned with $P_{\text{icl}}$.
Thus, the task vector extraction method $f$ with lower \oursmetric$(f)$ is considered a more desirable method. Notably, our proposed metric evaluates the quality of task vector extraction methods without requiring labels for the test set.

\input{figure/fig_corr}

\subsection{Correlation of Proposed Metric With Test Accuracy}
\label{subsec:met_exp}

Now, a valid question is, whether our proposed metric \oursmetric$(f)$ in~\eqref{eq:metric} is a good indicator for the quality of the task vector extraction method $f$, in the practical sense.
In this section, we empirically validate that \oursmetric$(f)$ strongly correlates with the test accuracy when task vector $\vv = f(Z)$ is used, across diverse classification benchmarks. 

\paragraph{Experimental Setup.}
We evaluate on four classification benchmarks -- AGNews, DBPedia~\citep{zhang2015character}, MR~\citep{pang2004sentimental}, SST-2~\citep{socher2013recursive}.
We compute the metric \oursmetric$(f)$ and the test accuracy for four task vector extraction methods $f$ -- Function Vector~\citep{todd2023function}, Task Vector~\citep{hendel2023context}, State Vector~\citep{li2024context}, and I2CL~\citep{li2024implicit}. We test on LLaMA-3.1-8B~\citep{dubey2024llama} and Qwen-2.5-7B~\citep{qwen2024qwen2} models using 30 demonstrations $Z$, across 20 independent runs with randomly sampled demonstrations $Z$.
See Appendix~\ref{app:exp_details} for further details.

\paragraph{Results.}
Fig.~\ref{fig:fig_corr} presents scatter plots of \oursmetric$(f)$ versus accuracy $\text{Acc}(f)$, where each point corresponds to each result obtained with different demonstrations $Z$.
Across all tasks and models, lower \oursmetric$(f)$ consistently correlates with higher accuracy, with most Pearson correlation coefficients exceeding $0.6$ in magnitude.
This strong correlation demonstrates that lower \oursmetric$(f)$ indicates superior task vector quality.
This result motivates developing a task vector extraction method aimed at reducing \oursmetric$(f)$, as proposed in the next section.

\section{Proposed Method: Linear Task Vector}
\label{sec:method}

In Sec.~\ref{sec:metric}, we empirically observed that in the TV mode, the test accuracy negatively correlates with the discrepancy \oursmetric$(f)$ measured for the task vector extraction method $f$.
This motivates us to devise a task vector extraction method that achieves a small \oursmetric$(f)$, which is expected to improve test accuracy when the task vector is used.

Based on this motivation, we propose Linear Task Vector (\oursmethod{}), which leverages a linear 
mapping to compute task vectors that enable TV mode inference 
to closely resemble that of ICL.
We first present the rationale behind our approach in Sec.~\ref{subsec:ltv_rationale},
then formally describe \oursmethod{} in Sec.~\ref{subsec:ltv}.

\subsection{Rationale Behind the Proposed Method}
\label{subsec:ltv_rationale}

The proposed \oursmethod{} method is designed to achieve the following two goals simultaneously.
First, we aim to design a method $f$ that achieves a small \oursmetric$(f)$, meaning that the next token probability $P_{\text{tv}}$ under TV mode closely replicates $P_{\text{icl}}$ under ICL mode.
Second, we seek to preserve the key advantage of ICL, which does not require updating the parameters of a model.
These goals lead us to formulate a proxy optimization problem which satisfies two conditions: (1) the proxy objective is provably related to the original objective \oursmetric{}, and (2) the proxy problem has a closed-form solution.

\input{figure/method}

\paragraph{Formulation of the proxy optimization problem.}
Given demonstrations $Z$, our goal is to find the optimal $f$ that minimizes the discrepancy \oursmetric$(f)$ between $P_{\text{icl}}$ and $P_{\text{tv}}$.
As depicted in Fig.~\ref{fig:metric}, the predictive distributions $P_{\text{icl}}$ and $P_{\text{tv}}$ are obtained by applying the LM head to the respective hidden states $\vh_{\text{icl}}$ and $\vh_{\text{tv}}$.
Thus, the metric \oursmetric$(f)$ in~\eqref{eq:metric} can be expressed as:
\begin{equation}
\nonumber
    \text{\oursmetric{}}(f) = 
    \mathbb{E} \Big[
    \KL
    \big(
    \sigma(\mW_{\mathrm{lm}} \vh_{\text{icl}})\;\|\;
    \sigma(\mW_{\mathrm{lm}}(\vh_{\text{zs}} + f(Z)))
    \big)
    \Big],
\end{equation}
where $\vh_{\text{tv}} = \vh_{\text{zs}} + \vv = \vh_{\text{zs}} + f(Z)$ is the hidden state of TV mode.
Thus, minimizing \oursmetric$(f)$ with respect to $f$ leads to the following optimization problem:
\begin{equation}
\label{eq:opt_kl}
    \min_{f} \;
    \mathbb{E} \Big[
    \KL\big(
    \sigma(\mW_{\mathrm{lm}} \vh_{\text{icl}})
    \;\|\;
    \sigma(\mW_{\mathrm{lm}}(\vh_{\text{zs}} + f(Z)))
    \big)
    \Big].
\end{equation}
However, solving~\eqref{eq:opt_kl} requires iteratively updating the task vector extraction model $f$, as no closed-form solution exists.
In order to preserve the key advantage of ICL (which does not require any model updates), we instead introduce a proxy objective in a way that the corresponding minimization problem has a closed-form solution on $f$.
To this end, we define the proxy objective as the mean squared error (MSE) between the hidden states $\vh_{\text{icl}}$ and $\vh_{\text{tv}}$:
\begin{equation}
\label{eq:mse}
    \mathcal{L}_{\text{MSE}}(f) = \mathbb{E} \Big[ \big\| \vh_{\text{icl}} - \vh_{\text{tv}} \big\|_2^2 \Big] = \mathbb{E} \Big[ \big\| \vh_{\text{icl}} - \vh_{\text{zs}} - f(Z) \big\|_2^2 \Big].
\end{equation}
Recall that \oursmetric{} in~\eqref{eq:metric} is defined as the expectation over query $x$.
With explicit dependence on $x$ and demonstrations $Z$, the proxy optimization problem becomes:
\begin{equation}
\label{eq:opt_mse}
    \min_f \mathbb{E}_{x\sim \mathcal{D}} \Big[ \big\| \vh_{\text{icl}}(x, Z) - \vh_{\text{zs}}(x) - f(x,Z) \big\|_2^2 \Big].
\end{equation}

A natural question is how this proxy objective $\mathcal{L}_{\text{MSE}}$ is related to the original objective \oursmetric{}.
The following proposition answers this question: 

\begin{proposition}[Relationship between \oursmetric{} and $\mathcal{L}_{\text{MSE}}$]
\label{prop:kl_bound}
Assume the language modeling head $\mW_{\text{lm}}$ has a bounded spectral norm $\|\mW_{\text{lm}}\|_2 \le C_1$ and the log-softmax function is $C_2$-Lipschitz in the $\ell_2$-norm.
Then, for any function $f$
\begin{equation}
    \text{\oursmetric{}}(f) \le C_1 C_2 \sqrt{\mathcal{L}_{\text{MSE}}(f)}.
\end{equation}
\end{proposition}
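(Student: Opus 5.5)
The plan is to bound the KL divergence pointwise, for each query $x$, by the $\ell_2$ distance between the two hidden states $\vh_{\text{icl}}$ and $\vh_{\text{tv}}$. We then take expectations and apply Jensen's inequality to pass from $\mathbb{E}\|\cdot\|_2$ to $\sqrt{\mathbb{E}\|\cdot\|_2^2}$.

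Fix $x$ and write $\vp=\sigma(\mW_{\mathrm{lm}}\vh_{\text{icl}})$ and $\vq=\sigma(\mW_{\mathrm{lm}}\vh_{\text{tv}})$. Writing the log-softmax as $\ell(\vz)=\log\sigma(\vz)$, we have
\begin{equation}
\nonumber
\KL(\vp\|\vq)=\sum_u \vp[u]\big(\ell(\mW_{\mathrm{lm}}\vh_{\text{icl}})[u]-\ell(\mW_{\mathrm{lm}}\vh_{\text{tv}})[u]\big)\le \big\|\ell(\mW_{\mathrm{lm}}\vh_{\text{icl}})-\ell(\mW_{\mathrm{lm}}\vh_{\text{tv}})\big\|_\infty .
\end{equation}
This holds because $\vp$ is a probability vector, so the sum is a convex combination of the coordinate differences, which is at most the largest absolute difference.

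Next, bound the $\ell_\infty$ norm by the $\ell_2$ norm. Apply the assumed $C_2$-Lipschitz property of log-softmax in $\ell_2$, then the spectral-norm bound on $\mW_{\mathrm{lm}}$. This gives
\begin{equation}
\nonumber
\KL(\vp\|\vq)\le C_2\|\mW_{\mathrm{lm}}(\vh_{\text{icl}}-\vh_{\text{tv}})\|_2\le C_1C_2\|\vh_{\text{icl}}-\vh_{\text{tv}}\|_2 .
\end{equation}

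Taking expectations over $x\sim\mathcal{D}$ gives $\text{\oursmetric}(f)\le C_1C_2\,\mathbb{E}\|\vh_{\text{icl}}-\vh_{\text{tv}}\|_2$. By Jensen's inequality, since the square root is concave, $\mathbb{E}\|\cdot\|_2\le\sqrt{\mathbb{E}\|\cdot\|_2^2}$. The right-hand side is $\sqrt{\mathcal{L}_{\text{MSE}}(f)}$ by \eqref{eq:mse}.

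The one delicate point is the label-restricted distribution of \eqref{eq:restricted_prob}. If \oursmetric{} is computed on $\mathcal{C}$ rather than the full vocabulary, the restricted distribution is a softmax of the logit subvector $(\mW_{\mathrm{lm}}\vh)[\mathcal{C}]$. This equals $\mW_{\mathcal{C}}\vh$, where $\mW_{\mathcal{C}}$ is the submatrix of $\mW_{\mathrm{lm}}$ on the rows in $\mathcal{C}$. Since $\|\mW_{\mathcal{C}}\|_2\le\|\mW_{\mathrm{lm}}\|_2\le C_1$, the same argument applies verbatim, provided $C_2$ is read as the Lipschitz constant of log-softmax on $\mathbb{R}^K$.

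The remaining issue is justifying the Lipschitz assumption, since log-softmax is in fact globally Lipschitz. Its Jacobian is $\mI-\1\vp^\top$, whose operator norm is bounded by an absolute constant (at most $\sqrt{2}$, say). The hypothesis is therefore harmless. Beyond this, I expect no real obstacle; the argument is a chain of three elementary inequalities.
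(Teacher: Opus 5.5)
Your proof is correct and follows the paper's argument essentially step for step. Both restrict to the submatrix $\mW_{\mathcal{C}}$, bound the KL divergence by the largest coordinate gap of the log-softmax outputs, pass to $\ell_2$ with the Lipschitz and spectral-norm bounds, and use Jensen (equivalently Cauchy--Schwarz) to reach $\sqrt{\mathcal{L}_{\text{MSE}}(f)}$. One correction to your closing aside, which the proof does not rely on: the $\ell_2$ operator norm of $\mI-\1\vp^\top$ is $\sqrt{K}\,\|\vp\|_2$, which approaches $\sqrt{K}$ near a vertex of the simplex, so $\sqrt{2}$ is not a valid $\ell_2$-Lipschitz constant for $K\ge 3$. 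It does bound each row $e_i-\vp$ in $\ell_2$, that is, the $\ell_2\to\ell_\infty$ constant, and that is exactly what your $\ell_\infty$ step needs.
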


We defer the proof of Proposition~\ref{prop:kl_bound} to Appendix~\ref{app:proof_kl_bound}.
This proposition indicates that we can reduce \oursmetric{}$(f)$ by finding $f$ which has small $\mathcal{L}_{\text{MSE}}(f)$.

\paragraph{Using a linear mapping to solve the proxy problem.}
We now focus on 
solving the proxy problem in~\eqref{eq:opt_mse}, \ie finding $f$ which outputs the task vector $\vv=f(x,Z)$ that resembles $\vh_{\text{icl}} - \vh_{\text{zs}}$.
In other words, the purpose of task vector $\vv$ is to compensate for the effect of demonstrations in the ICL mode (compared with the zero-shot inference mode which does not use demonstrations) in the hidden state. Thus, a natural approach to solve this problem is to find a mapping from $\vh_{\text{zs}}$ (the hidden state for the zero-shot inference mode) to the target $\vh_{\text{icl}} - \vh_{\text{zs}}$.
In order to get a closed-form solution for the optimization problem, we consider the simplest case\footnote{In Appendix~\ref{app:additional_exp}, we empirically compare this linear choice with alternatives, and discuss non-linear extensions.} of using a \emph{linear} mapping $\mW$, which results in the following formulation: 
\begin{align*}
    f(x,Z) = \mW(Z) \vh_{\text{zs}}(x).
\end{align*}
This leads to the following optimization problem:
\begin{equation}
\label{eq:lin_opt}
    \min_{\mW} \mathbb{E}_{x\sim \mathcal{D}} \Big[ \big\| \vh_{\text{icl}}(x, Z) - \vh_{\text{zs}}(x) - \mW \vh_{\text{zs}}(x) \big\|_2^2 \Big].
\end{equation}
This reformulated problem in~\eqref{eq:lin_opt} has a closed-form solution, as described below.

\subsection{Formal Description of Proposed Method}
\label{subsec:ltv}

We now formalize Linear Task Vector (\oursmethod{}), a task vector extraction method $f$ which estimates a linear mapping $\mW^{\star}$ that minimizes~\eqref{eq:lin_opt}.
Specifically, in the extraction stage, \oursmethod{} estimates $\mW^{\star}$ via ridge regression~\citep{hoerl1970ridge}.
In the inference stage, the estimated optimal mapping $\mW^{\star}$ is applied to compute the task vector.

\paragraph{Extraction Stage.}
Given demonstrations $Z$, we estimate $\mW \in \mathbb{R}^{d \times d}$ by solving a regression problem mapping $\vh_{\text{zs}}$ to $(\vh_{\text{icl}} - \vh_{\text{zs}})$, where $d$ is the dimension of the hidden state.
As shown in the leftmost column of Fig.~\ref{fig:method}, we first sample $N$ unlabeled queries $\{x_j\}_{j=1}^N$ from the training set, and use them to compute the hidden states $h$.
Let $\mH \in \mathbb{R}^{d \times N}$ be the matrix whose $j$-th column is $\vh_{\text{zs}}(x_j)$, and let $\mY \in \mathbb{R}^{d \times N}$ be the matrix whose $j$-th column is $(\vh_{\text{icl}}(x_j) - \vh_{\text{zs}}(x_j))$.
To prevent ill-conditioning of $\mH\mH^\top$ when $d$ is large relative to $N$, we employ the ridge regression
\begin{equation}
\label{eq:ridge}
    \mW^{\star} = \underset{\mW}{\operatorname{argmin}}
    \left(
    \|\mY-\mW\mH\|_F^2 +\lambda\|\mW\|_F^2
    \right),
\end{equation}
where $\lambda$ is the regularization parameter, and $\|\cdot\|_F$ denotes the Frobenius norm.
This yields a closed-form solution, computed by solving a linear system
\begin{equation}
\label{eq:closed_form}
    (\mH \mH^\top + \lambda \mI) \mW^{\star\top} = \mH \mY^\top,
\end{equation}
where $\mI$ denotes the identity matrix.

\paragraph{Inference Stage.}
As shown in the rightmost column of Fig.~\ref{fig:method}, we compute the task vector for a test query $x_{\text{test}}$ as $\vv = \mW^{\star} \vh_{\text{zs}}(x_{\text{test}})$.
The task-conditioned hidden state is then obtained as $\vh_{\text{tv}} = \vh_{\text{zs}} + \vv$, from which the LM head computes the predictive distribution $P_{\text{tv}}$.

\section{Experiments}
\label{sec:exp}
In this section, we empirically validate the proposed \oursmethod{} method.
Sec.~\ref{subsec:exp_setup} describes the evaluation setup, and
Sec.~\ref{subsec:exp_results} presents the experimental results.
Codes are available at \href{https://github.com/Jii111/LTV}{this GitHub repository}.

\subsection{Experimental Setup}
\label{subsec:exp_setup} 

\paragraph{Evaluation.}
Following prior work~\citep{li2024implicit, gao2025optimization}, we evaluate \oursmethod{} on a diverse set of text classification benchmarks.
We consider eight widely used datasets: SST-2, SST-5~\citep{socher2013recursive}, MR~\citep{pang2005seeing}, AGNews, DBPedia~\citep{zhang2015character}, TREC~\citep{voorhees2000building}, SUBJ~\citep{pang2004sentimental}, HateSpeech18~\citep{de2018hate}.
We report the average accuracy over five independent runs, each using a different set of demonstrations with $k=30$.
For \oursmethod{}, we report results using $N=256$ train queries and a regularization parameter $\lambda = 5$ selected via grid search.
Further details on evaluation are provided in Appendix~\ref{app:exp_details}.

\vspace{-2mm}
\paragraph{Models.}
We conduct experiments on five widely used LLMs: LLaMA-2-7B~\citep{touvron2023llama}, LLaMA-2-13B~\citep{touvron2023llama}, LLaMA-3.1-8B~\citep{dubey2024llama}, Qwen-2.5-7B~\citep{qwen2024qwen2}, and Qwen-3-8B~\citep{yang2025qwen3}.

\vspace{-2mm}
\paragraph{Baselines.}
We compare \oursmethod{} with existing task vector methods in a training-free setting, which can be categorized into three groups based on the module from which task vectors are extracted.
First, \emph{Task Vector}~\citep{hendel2023context} extracts the task vector from the outputs of a single \emph{decoder layer}.
Second, \emph{Function Vector}~\citep{todd2023function} and \emph{State Vector}~\citep{li2024context} extract task vectors from a subset of \emph{attention heads}.
Third, \emph{I2CL}~\citep{li2024implicit} extracts task vectors from both attention heads and MLP modules.
In I2CL, the task vector is injected after scaling, where the scaling coefficient is \emph{trained} using a validation set with true labels.
For fair comparison in a training-free setting, we use the default coefficient selected in~\citep{li2024implicit}, instead of training the coefficient.
We also compare with standard zero-shot and ICL baselines.
Further details on each method are provided in Appendix~\ref{app:baseline_methods}.

\input{table/comparison_with_baseline}

\input{figure/kl_bar}
\input{table/time_efficiency}
\input{table/hyperparameters}
\input{table/regression}
\input{table/transfer}

\subsection{Results}
\label{subsec:exp_results}

\paragraph{\oursmethod{} outperforms baselines.}
Table~\ref{tab:comparison_with_baseline} reports classification accuracy of \oursmethod{} and baseline task vector methods across eight benchmarks.
We present results on two representative models here, with full results on all five LLMs provided in Appendix~\ref{app:additional_exp}.
As shown in the rightmost column, \oursmethod{} achieves the highest average accuracy on all five models.
Notably, \oursmethod{} outperforms the best baseline -- \emph{State Vector} -- on LLaMA-2-13B by 9.2\%, achieving a score of 79.46\% compared to 70.22\%.
Moreover, \oursmethod{} ranks first on the majority of individual tasks across all models; specifically, our method achieves the best performance on 31 out of 40 cases measured on 8 benchmarks and 5 models.
This strong performance highlights that, among TV methods, \oursmethod{} most effectively extracts task vectors that capture the effect of demonstrations.

\vspace{-3pt}
\paragraph{\oursmethod{} effectively reduces the proposed metric.}
Recall that \oursmethod{} is designed to achieve a small \oursmetric{}.
Given its strong performance over baselines, a natural follow-up is whether \oursmethod{} indeed attains lower \oursmetric{}.
Fig.~\ref{fig:kl_bar} compares \oursmetric{} across five methods on eight benchmarks experimented with LLaMA-3.1-8B model.
Compared to other baselines, \oursmethod{} achieves the lowest \oursmetric{} on all benchmarks.
This confirms that \oursmethod{} works as intended, successfully reducing the discrepancy between TV and ICL modes.

\vspace{-3pt}
\paragraph{\oursmethod{} incurs the lowest inference overhead.}
We evaluate the computational efficiency of \oursmethod{} in terms of one-time extraction cost and per-sample inference latency (Table ~\ref{tab:inference_time}). Although \oursmethod{} requires a higher extraction time compared to some TV baselines, this overhead is a one-time pre-computation cost that is effectively amortized over the entire test set. More importantly, \oursmethod{} achieves the most competitive inference latency (0.0226s per sample) among all considered vector-based methods, matching the speed of zero-shot inference. This efficiency stems from the fact that \oursmethod{} operates solely on the final layer through a single matrix-vector multiplication, providing a significant accuracy boost (75.46\%) without compromising the model's original inference throughput.

\vspace{-3pt}
\paragraph{\oursmethod{} is robust to hyperparameter choices.}
One might wonder whether the performance of \oursmethod{} relies on careful hyperparameter tuning.
We examine the effect of the number of queries $N$ and the ridge regularization parameter $\lambda$ on both accuracy and \oursmetric{}. 
Table~\ref{tab:hyperparameters} reports results obtained by varying one hyperparameter at a time.
Halving $N$ from the default value of 256 to 128 decreases accuracy by only 0.5\% with a slight increase in \oursmetric{}.
Similarly, varying $\lambda$ from the default value of 5.0 to 1.0 or 10.0 maintains comparable accuracy and \oursmetric{}.
These results confirm that \oursmethod{} is robust to hyperparameter choices.

\vspace{-3pt}
\paragraph{\oursmethod{} also remains effective on regression tasks.}
Most prior studies on task vectors have focused exclusively on classification, leaving it unclear whether existing TV methods are effective in settings with continuous outputs.
We address this gap by evaluating TV methods on regression tasks, following the experimental setup of prior studies~\citep{yang2025task, garg2022can}.
Specifically, we conduct linear regression and ReLU~\citep{agarap2018deep} regression tasks, and report the mean squared error (MSE) of \oursmethod{} and task vector baselines in Table~\ref{tab:regression}.
\oursmethod{} achieves the lowest MSE among all task vector baselines on both tasks (5.13 on linear regression; 3.45 on ReLU regression), showing that our approach is also effective on regression tasks.
Detailed setups for the regression experiments are provided in Appendix~\ref{app:regression}.

\vspace{-2pt}
\paragraph{\oursmethod{} can transfer task vectors across model scales.}

One might be concerned whether task vector methods remain useful even when the inference model cannot perform ICL effectively.
Such cases arise, for instance, when the context length is too short to fit enough demonstrations or when the inference model has limited ICL capability.
\oursmethod{} can address this concern through cross-model transfer, which is enabled by a simple extension (see Appendix~\ref{app:transfer} for details).
This transfer allows us to extract task vectors from a larger, more capable model and apply them to a smaller one.
Notably, as shown in Table~\ref{tab:ltv_transfer}, transferring \oursmethod{} from the 72B model to the 7B model improves the average classification accuracy of the small model by 6.43\%, even matching the accuracy of the 72B model (79.96\%).
This highlights the practicality of \oursmethod{} even when the inference model is constrained.

\vspace{-2pt}
\section{Conclusion}

In this paper, we investigated the mechanics of task vectors (TVs) as a means of compressing In-Context Learning (ICL) demonstrations into the hidden states of LLMs. We introduced $d_{\text{NTP}}$, a metric that quantifies the discrepancy in next-token probability between TV-based and standard ICL-based inference. Our analysis reveals that $d_{\text{NTP}}$ serves as a proxy for downstream performance, enabling the evaluation of TV methods without the need for exhaustive task-specific testing. Leveraging these insights, we developed Linear Task Vector (LTV), a method that employs a linear mapping to minimize $d_{\text{NTP}}$. Experimental results across multiple benchmarks and architectures demonstrate that LTV outperforms existing training-free TV methods. By achieving superior accuracy with reduced inference latency, LTV offers a more efficient and effective framework for task compression in LLMs.

%% file: figure/modes.tex
\begin{figure}[t]
    \centering
\includegraphics[width=1.0\linewidth]{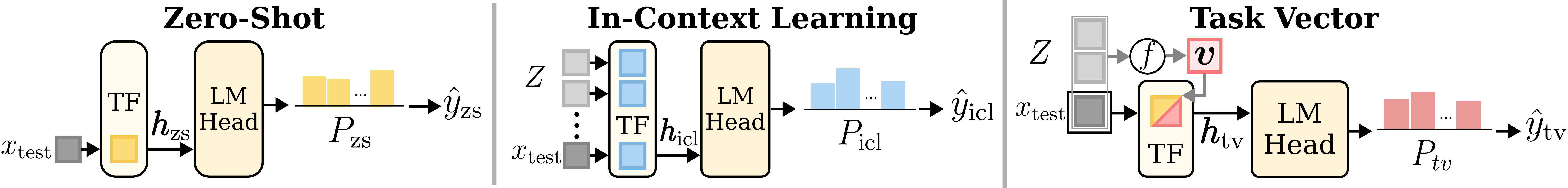}
    \caption{
Comparison of three inference modes.
In \emph{zero-shot} inference mode (\textbf{left}), the model predicts the next token $\hat{y}_{\text{zs}}$ 
solely based on the test query $x_{\text{test}}$.
In the \emph{In-Context Learning} mode (\textbf{middle}), the model predicts the next token $\hat{y}_{\text{icl}}$  based on the concatenation of demonstrations $Z$ and the query $x_{\text{test}}$.
In the \emph{task vector} mode (\textbf{right}), the model predicts the next token $\hat{y}_{\text{tv}}$ based on 
not only the query $x_{\text{test}}$, but also an injected task vector $\vv=f(Z)$, which is added to the model activation.
Here, the task vector $\vv$ is constructed by a function $f$ using the demonstrations $Z$.
}
\label{fig:modes}
\end{figure}

%% file: figure/metric.tex
\begin{figure}[t]
    \centering
\includegraphics[width=0.8\linewidth]{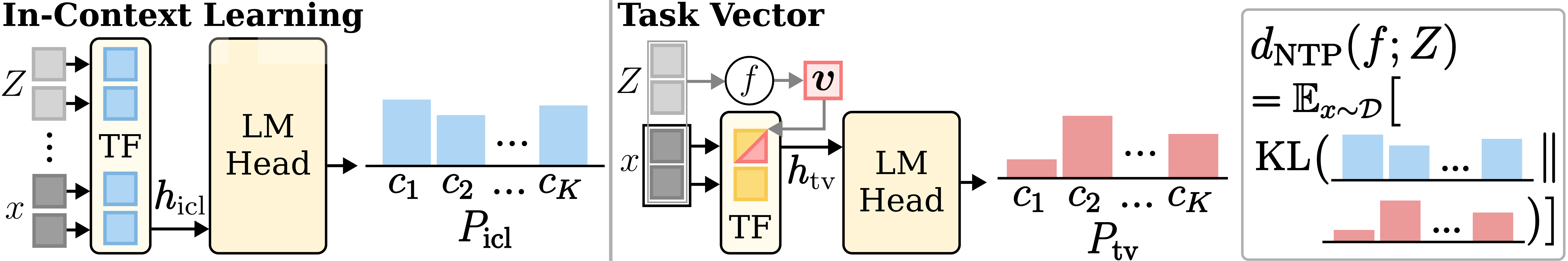}
    \caption{
Overview of the proposed metric $d_{\text{NTP}}(f; Z)$ in~\eqref{eq:metric}, which measures the quality of the task vector extraction method $f$.
In the ICL mode, the model gets demonstrations $Z$ and test query $x_{\text{test}}$ together to estimate the probability distribution $P_{\text{icl}}$ for the next token (\textbf{left}).
In the TV mode, the task vector $\vv$ is injected in the hidden layer (instead of putting  demonstrations $Z$ in the input layer) to get the distribution $P_{\text{tv}}$ for the next token (\textbf{right}). Our proposed metric measures the expected Kullback-Leibler (KL) divergence between these two distributions ($P_{\text{icl}}$ and $P_{\text{tv}}$), thus checking whether the effect of using $\vv=f(Z)$ is equivalent to the effect of using $Z$ in the input.
}
\label{fig:metric}
\end{figure}

%% file: figure/fig_corr.tex
\begin{figure*}[t]
    \centering
\includegraphics[width=1.0\textwidth]{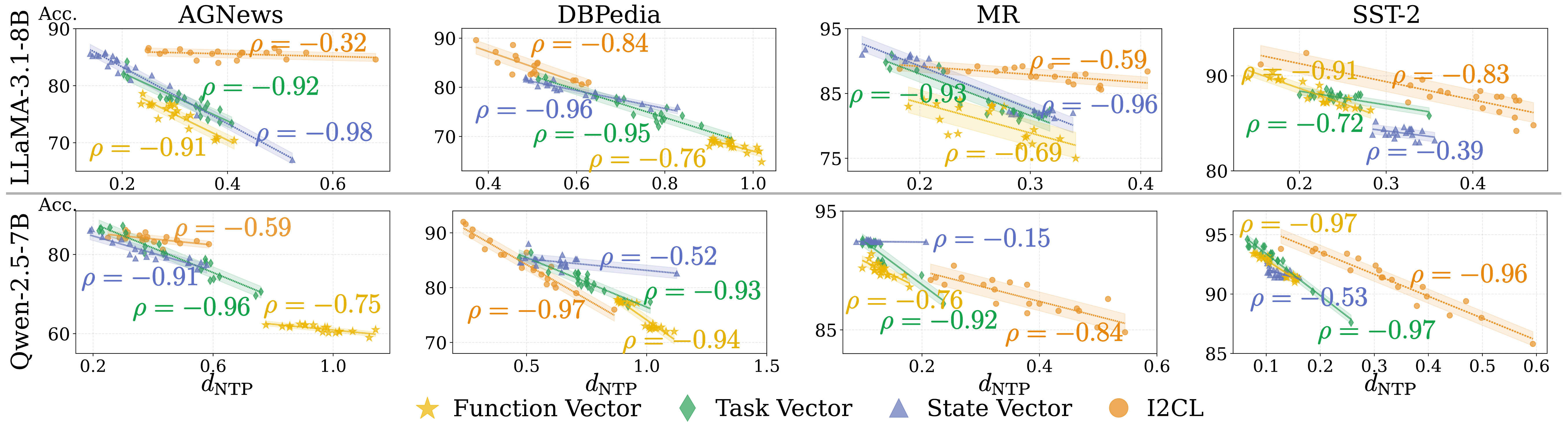}
    \caption{
    Correlation between the proposed discrepancy metric $d_{\text{NTP}}(f; Z)$ and the test accuracy, measured on various task vector extraction methods $f$ and the demonstrations $Z$.
    We test on four variants of $f$: Function Vector~\citep{todd2023function}, Task Vector~\citep{hendel2023context}, State Vector~\citep{li2024context}, and I2CL~\citep{li2024implicit}, each of which is shown in different colors. Here, each point corresponds to the result for different demonstrations $Z$.
    Pearson correlation coefficients $(\rho)$ are reported for each method $f$.
    Across four classification benchmarks (columns) and two models (rows), lower $d_{\text{NTP}}(f; Z)$ consistently correlates with higher accuracy, validating our metric as a principled criterion for the quality of the task vector.
    }
\label{fig:fig_corr}
\end{figure*}

%% file: figure/method.tex
\begin{figure*}[t]
    \centering
\includegraphics[width=1.0\textwidth]{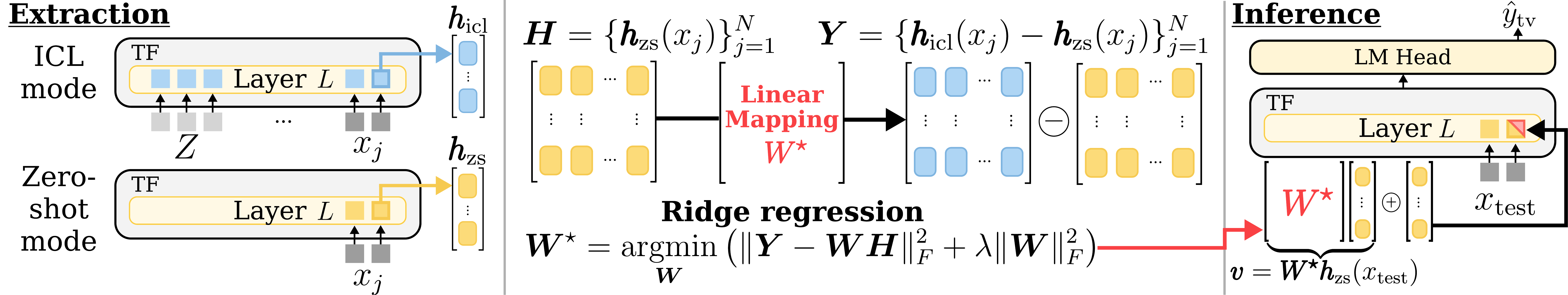}
\caption{
Overview of our Linear Task Vector (\oursmethod{}) method.
Our method employs a linear mapping $\mW$ that estimates the effect of demonstrations in the hidden space $(\vh_{\text{icl}} - \vh_{\text{zs}})$ from the hidden state $\vh_{\text{zs}}$ of the zero-shot inference mode via ridge regression.
%To this end, the extraction method $f$ extracts a linear mapping $\mW$ from demonstrations $Z$ and uses this $\mW$ in the inference phase.
In the extraction phase \textbf{(left)}, we use $N$ unlabeled training queries $\{x_j\}_{j=1}^N$ to define (1) the regression target matrix $\mY$ as the concatenation of $N$ column vectors $\{\vh_{\text{icl}}(x_j) - \vh_{\text{zs}}(x_j)\}_{j=1}^N$ and (2) the variable matrix $\mH$ as the concatenation of $N$ column vectors $\{\vh_{\text{zs}}(x_j)\}_{j=1}^N$.
Subsequently, a closed-form solution for the optimal linear mapping $\mW^{\star}$ from $\mH$ to $\mY$
is obtained.
In the inference phase \textbf{(right)}, we use the mapping $\mW^{\star}$ to inject the task vector $\vv=\mW^{\star} \vh_{\text{zs}}(x_{\text{test}})$ in the hidden state, 
for a given test query $x_{\text{test}}$.
}
\label{fig:method}
\end{figure*}

%% file: table/comparison_with_baseline.tex
\newcommand{\bg}[1]{\cellcolor{gray!15}#1}

\begin{table*}[t]
    \LARGE
    \centering
    \caption{
Classification accuracy (\%) of \oursmethod{} and four TV methods, tested on eight benchmarks. We also compare with two inference modes (zero-shot and ICL) for reference.
\oursmethod{} method achieves the highest average accuracy (Avg.) across all models; check full results on five LLMs in Appendix~\ref{app:additional_exp}.
}
    \resizebox{1.0\textwidth}{!}{
    {\setlength{\tabcolsep}{10pt}
    \begin{tabular}{l l| c c c c c c c c| c}
        \toprule
        Models & Methods
        & AGNews & DBPedia & HateSpeech18 & MR & SST-2 & SST-5 & Subj & TREC & Avg. \\
        \midrule

        \multirow{7}{*}{\shortstack{\textit{LLaMA-2-13B}\\{\large\citep{touvron2023llama}}}}
        & Zero-shot
          & 73.80 & 76.20 & 55.20 & 61.20 & 66.60 & 20.80 & 49.80 & 69.60 & 59.15 \\
        & ICL
          & 88.04 {\large ($\pm$1.7)}
          & 96.88 {\large ($\pm$1.1)}
          & 77.40 {\large ($\pm$2.2)}
          & 94.44 {\large ($\pm$0.5)}
          & 94.80 {\large ($\pm$0.7)}
          & 46.52 {\large ($\pm$2.7)}
          & 82.28 {\large ($\pm$7.1)}
          & 83.36 {\large ($\pm$5.2)}
          & 82.97 \\
          \cmidrule(lr){2-11}
        & Function Vector{\large~\citep{todd2023function}}
          & 74.00 {\large ($\pm$1.2)}
          & 76.40 {\large ($\pm$7.5)}
          & 54.08 {\large ($\pm$0.4)}
          & 61.84 {\large ($\pm$3.4)}
          & 72.44 {\large ($\pm$6.0)}
          & 21.40 {\large ($\pm$1.2)}
          & 50.00 {\large ($\pm$0.0)}
          & 69.96 {\large ($\pm$1.1)}
          & 60.02 \\
        & Task Vector{\large~\citep{hendel2023context}}
          & 79.84 {\large ($\pm$2.9)}
          & 80.72 {\large ($\pm$1.9)}
          & \underline{71.48} {\large ($\pm$6.7)}
          & 83.16 {\large ($\pm$0.8)}
          & 82.80 {\large ($\pm$4.2)}
          & 33.64 {\large ($\pm$2.1)}
          & \underline{49.72} {\large ($\pm$0.1)}
          & \underline{73.84} {\large ($\pm$5.3)}
          & 69.40 \\
        & State Vector{\large~\citep{li2024context}}
          & \underline{84.64} {\large ($\pm$4.0)}
          & \underline{89.48} {\large ($\pm$4.9)}
          & 58.80 {\large ($\pm$8.8)}
          & \underline{89.20} {\large ($\pm$2.8)}
          & \underline{87.20} {\large ($\pm$3.6)}
          & \underline{36.08} {\large ($\pm$3.0)}
          & 49.40 {\large ($\pm$0.6)}
          & 66.96 {\large ($\pm$1.0)}
          & \underline{70.22} \\
        & I2CL{\large~\citep{li2024implicit}}
          & 78.88 {\large ($\pm$0.3)}
          & 79.00 {\large ($\pm$0.5)}
          & 54.48 {\large ($\pm$0.1)}
          & 60.68 {\large ($\pm$0.3)}
          & 64.80 {\large ($\pm$0.4)}
          & 25.96 {\large ($\pm$0.1)}
          & 50.00 {\large ($\pm$0.0)}
          & 69.12 {\large ($\pm$0.4)}
          & 60.37 \\
        & \bg{\textbf{\oursmethod{} (Ours)}}
          & \bg{\textbf{86.68} {\large ($\pm$1.7)}}
          & \bg{\textbf{93.20} {\large ($\pm$0.5)}}
          & \bg{\textbf{72.08} {\large ($\pm$2.2)}}
          & \bg{\textbf{90.20} {\large ($\pm$2.0)}}
          & \bg{\textbf{88.96} {\large ($\pm$2.1)}}
          & \bg{\textbf{41.88} {\large ($\pm$2.2)}}
          & \bg{\textbf{78.76} {\large ($\pm$2.7)}}
          & \bg{\textbf{83.92} {\large ($\pm$6.3)}}
          & \bg{\textbf{79.46}} \\
        \midrule

        % -------------------- LLaMA-3.1-8B --------------------
        \multirow{7}{*}{\shortstack{\textit{LLaMA-3.1-8B}\\{\large\citep{dubey2024llama}}}}
        & Zero-shot
          & 75.00 & 69.00 & 60.60 & 82.20 & 86.80 & 25.60 & 59.20 & 49.40 & 63.48 \\
        & ICL
          & 87.16 {\large ($\pm$1.2)}
          & 97.68 {\large ($\pm$0.8)}
          & 74.32 {\large ($\pm$8.1)}
          & 94.52 {\large ($\pm$0.4)}
          & 94.20 {\large ($\pm$1.1)}
          & 48.32 {\large ($\pm$1.1)}
          & 85.96 {\large ($\pm$5.3)}
          & 79.08 {\large ($\pm$7.3)}
          & 82.66 \\
          \cmidrule(lr){2-11}
        & Function Vector{\large~\citep{todd2023function}}
          & 76.16 {\large ($\pm$0.1)}
          & 69.44 {\large ($\pm$1.0)}
          & 63.00 {\large ($\pm$0.2)}
          & 83.24 {\large ($\pm$0.3)}
          & 87.32 {\large ($\pm$0.6)}
          & 26.20 {\large ($\pm$1.3)}
          & 59.52 {\large ($\pm$0.2)}
          & \underline{69.12} {\large ($\pm$0.6)}
          & 66.75 \\
        & Task Vector{\large~\citep{hendel2023context}}
          & \underline{81.36} {\large ($\pm$3.7)}
          & \underline{83.24} {\large ($\pm$1.5)}
          & \underline{67.64} {\large ($\pm$2.0)}
          & 83.48 {\large ($\pm$3.5)}
          & 87.88 {\large ($\pm$0.3)}
          & 34.76 {\large ($\pm$1.9)}
          & 61.12 {\large ($\pm$1.4)}
          & 66.48 {\large ($\pm$1.0)}
          & \underline{70.75} \\
        & State Vector{\large~\citep{li2024context}}
          & 80.28 {\large ($\pm$4.3)}
          & 80.80 {\large ($\pm$1.5)}
          & 65.40 {\large ($\pm$1.3)}
          & \underline{85.96} {\large ($\pm$4.9)}
          & 84.12 {\large ($\pm$0.6)}
          & \underline{36.60} {\large ($\pm$1.6)}
          & \underline{62.52} {\large ($\pm$3.5)}
          & 67.28 {\large ($\pm$1.9)}
          & 70.37 \\
        & I2CL{\large~\citep{li2024implicit}}
          & 76.76 {\large ($\pm$0.4)}
          & 72.56 {\large ($\pm$0.4)}
          & 62.24 {\large ($\pm$0.5)}
          & 85.24 {\large ($\pm$0.3)}
          & \textbf{90.80} {\large ($\pm$0.2)}
          & 32.48 {\large ($\pm$0.4)}
          & 62.28 {\large ($\pm$0.1)}
          & 49.00 {\large ($\pm$0.5)}
          & 66.42 \\
        & \bg{\textbf{\oursmethod{} (Ours)}}
          & \bg{\textbf{82.84} {\large ($\pm$2.5)}}
          & \bg{\textbf{93.36} {\large ($\pm$1.4)}}
          & \bg{\textbf{70.44} {\large ($\pm$6.0)}}
          & \bg{\textbf{88.68} {\large ($\pm$1.0)}}
          & \bg{\underline{90.08} {\large ($\pm$1.2)}}
          & \bg{\textbf{38.20} {\large ($\pm$3.4)}}
          & \bg{\textbf{67.16} {\large ($\pm$11.4)}}
          & \bg{\textbf{72.88} {\large ($\pm$7.2)}}
          & \bg{\textbf{75.46}} \\
        \bottomrule
    \end{tabular}
    }
    }
    \label{tab:comparison_with_baseline}
\end{table*}

%% file: figure/kl_bar.tex
\begin{figure*}[t]
    \vspace{3pt}
    \centering
\includegraphics[width=0.9\textwidth]{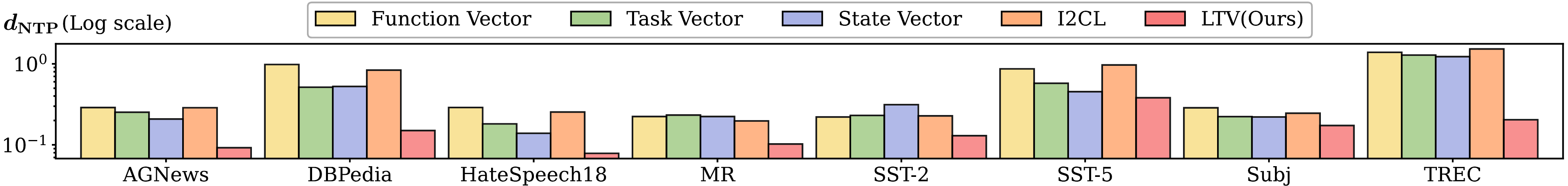}
    \caption{
    Comparison of \oursmetric{} across \oursmethod{} and four baselines on eight benchmarks, tested on LLaMA-3.1-8B.
    \oursmethod{} consistently achieves the lowest \oursmetric{} across all benchmarks.
    %,\ confirming that minimizing the proxy objective $\mathcal{L}_{\text{MSE}}$ using linear mapping effective reduces \oursmetric{}.
    }
\label{fig:kl_bar}
\vspace{-6pt}
\end{figure*}

%% file: table/time_efficiency.tex
\begin{table}[t]
    \centering
    \caption{
    Extraction and inference latency of \oursmethod{} and task vector baselines, evaluated on LLaMA-3.1-8B.
    Classification accuracy (\%) is reported for reference.
    }
    \setlength{\tabcolsep}{10pt}
    \resizebox{\linewidth}{!}
    {\scriptsize
    \begin{tabular}{l|cc|ccccc}
        \toprule
        & Zero-shot & ICL 
        & Function Vector 
        & I2CL 
        & State Vector
        & Task Vector 
        & \cellcolor{gray!15}\textbf{\oursmethod{} (Ours)} \\
        \midrule
        Extraction (s) $\downarrow$ 
            & - & - & 198.965 & \textbf{1.347} & \underline{27.756} & 27.773 
            & \cellcolor{gray!15}42.664 \\
        Inference (s) $\downarrow$ 
            & 0.0220 & 0.2238 & 0.0614 & 0.0246 & 0.0292 & \underline{0.0236} 
            & \cellcolor{gray!15}\textbf{0.0226} \\
        Accuracy (\%) $\uparrow$ 
            & 63.48 & 82.66 & 66.75 & 66.42 & 70.37 & \underline{70.75} 
            & \cellcolor{gray!15}\textbf{75.46} \\
        \bottomrule
    \end{tabular}
    }
    \label{tab:inference_time}
    \vspace{2pt}
\end{table}

%% file: table/hyperparameters.tex
% \begin{table}[t]
% \centering
% \caption{
% Effect of hyperparameters (the number of unlabeled queries $N$ and the regularization parameter $\lambda$) on the performance of \oursmethod{} and \oursmetric{}.
% Experiments are conducted on LLaMA-3.1-8B, with gray rows indicating default settings.
% }
% \resizebox{0.6\columnwidth}{!}
% {
% \begin{tabular}{ccc|ccc}
% \toprule
% $N$ & Avg. Acc. (\%) $\uparrow$ & \oursmetric{} $\downarrow$
% & $\lambda$ & Avg. Acc. (\%) $\uparrow$ & \oursmetric{} $\downarrow$\\
% \midrule
%     32  & 69.9  & 0.266 
%     & 0.1  & 54.5 & 1.035\\
%     64  & 72.2  & 0.229 
%     & 1.0 & 75.2 & 0.149\\
%     128 & 74.7 & 0.178 
%     & \cellcolor{gray!15}5.0  & \cellcolor{gray!15}75.2 & \cellcolor{gray!15}0.147 \\
%     \cellcolor{gray!15}256
%     & \cellcolor{gray!15}75.2
%     & \cellcolor{gray!15}0.155 
%     & 10.0 & 75.2 & 0.145 \\
% \bottomrule
% \end{tabular}
% }
% \label{tab:hyperparameters}
% \end{table}

\begin{table}[t]
\centering
\caption{
Effect of hyperparameters (the number of unlabeled queries $N$ and the regularization parameter $\lambda$) on the performance of \oursmethod{} and \oursmetric{}.
Experiments are conducted on LLaMA-3.1-8B, with gray columns indicating default settings.
}
\setlength{\tabcolsep}{24pt}
\resizebox{\linewidth}{!}
{\small
\begin{tabular}{l|ccc|ccc}
\toprule
& \multicolumn{3}{c|}{$N$} & \multicolumn{3}{c}{$\lambda$} \\
\cmidrule(lr){2-4} \cmidrule(lr){5-7}
& 64 & 128 & \cellcolor{gray!15}256 
& 1.0 & \cellcolor{gray!15}5.0 & 10.0 \\
\midrule
Avg. Acc. (\%) $\uparrow$ 
& 72.2 & 74.7 & \cellcolor{gray!15}\textbf{75.2} 
& 75.2 & \cellcolor{gray!15}\textbf{75.2} & 75.2 \\
\oursmetric{} $\downarrow$ 
& 0.229 & 0.178 & \cellcolor{gray!15}\textbf{0.147} 
& 0.149 & \cellcolor{gray!15}\textbf{0.147} & 0.145 \\
\bottomrule
\end{tabular}
}
\label{tab:hyperparameters}
\end{table}

%% file: table/regression.tex
\begin{table}[t]
    \vspace{-4pt}
    \centering
    \caption{
    Regression performance of \oursmethod{} and task vector baselines, measured by mean squared error (MSE), on linear regression and ReLU regression tasks.
    Experiments are conducted on LLaMA-3.1-8B.
    Zero-shot and ICL are shown for reference.
    }
    \setlength{\tabcolsep}{8pt}
    \resizebox{\linewidth}{!}
    {\scriptsize
    \begin{tabular}{l | cc | ccccc}
        \toprule
        & Zero-shot & ICL 
        & Function Vector 
        & I2CL 
        & State Vector 
        & Task Vector 
        & \cellcolor{gray!15}\textbf{\oursmethod{} (Ours)} \\
        \midrule
        Linear Regression $\downarrow$ 
            & 5.51 & 3.97 & \underline{5.23} & 5.29 & 5.35 & 5.97 
            & \cellcolor{gray!15}\textbf{5.13} \\
        ReLU Regression $\downarrow$ 
            & 3.82 & 3.33 & \underline{3.63} & 3.68 & 3.80 & 4.05 
            & \cellcolor{gray!15}\textbf{3.45} \\
        \bottomrule
    \end{tabular}
    }
    \label{tab:regression}
    \vspace{2pt}
\end{table}

%% file: table/transfer.tex
\begin{table*}[t]
    \centering
    \caption{
    Classification accuracy (\%) of \oursmethod{} when task vectors obtained from a large model are transferred and applied to a small model for task vector-based inference. We use Qwen-2.5-72B as the large model and Qwen-2.5-7B as the small model. As a reference, we report zero-shot, ICL, and \oursmethod{} inference of each model using its own task vectors. Values in parentheses indicate the change relative to the \oursmethod{} performance of the small model.
    }
    \label{tab:ltv_transfer}
    \resizebox{\linewidth}{!}{
    {\setlength{\tabcolsep}{6pt}
    \begin{tabular}{l l| c c c c c c c c| c}
        \toprule
        Models & Methods
        & AGNews & DBPedia & HateSpeech18 & MR & SST-2 & SST-5 & Subj & TREC & Avg. \\
        \midrule
        % -------------------- Small Model --------------------
        \multirow{3}{*}{\shortstack{\textit{Small Model}\\ \scriptsize{(\textit{Qwen-2.5-7B}\,\citep{qwen2024qwen2})}}}
        & Zero-shot
          & 74.20 & 73.20 & 61.20 & 62.20 & 55.40 & 20.00 & 56.80 & 67.20 & 58.78 \\
        & ICL
          & 83.24 & 83.48 & 80.60 & 92.88 & 95.00 & 46.60 & 76.00 & 87.04 & 80.60 \\
        & \oursmethod{}
          & 78.48 & 77.24 & 72.28 & 87.56 & 89.00 & 33.16 & 66.52 & 84.32 & 73.57 \\
        \midrule
        % -------------------- Large Model --------------------
        \multirow{3}{*}{\shortstack{\textit{Large Model}\\ \scriptsize{(\textit{Qwen-2.5-72B}\,\citep{qwen2024qwen2})}}}
        & Zero-shot
          & 72.00 & 68.00 & 70.40 & 92.00 & 90.60 & 36.00 & 53.00 & 73.00 & 69.38 \\
        & ICL
          & 88.08 & 95.64 & 82.80 & 93.88 & 96.44 & 46.12 & 95.80 & 75.44 & 84.25 \\
        & \oursmethod{}
          & 81.40 & 88.92 & 77.20 & 91.48 & 94.36 & 37.16 & 91.92 & 77.24 & 79.96 \\
        \midrule
        % -------------------- Transfer --------------------
        \rowcolor{gray!15}
        \multicolumn{2}{l|}{Transferred \oursmethod{} \scriptsize{(72B $\rightarrow$ 7B)}}
        & \textbf{83.24}
        & \textbf{92.68}
        & \textbf{76.48}
        & \textbf{91.36}
        & \textbf{92.36}
        & \textbf{42.48}
        & \textbf{90.24}
        & 71.16
        & \textbf{80.00} \\
        \rowcolor{gray!15}
        \multicolumn{2}{l|}{}
        & \rule{0pt}{5pt}\scriptsize\textcolor{red}{(+4.76)}
        & \scriptsize\textcolor{red}{(+15.44)}
        & \scriptsize\textcolor{red}{(+4.20)}
        & \scriptsize\textcolor{red}{(+3.80)}
        & \scriptsize\textcolor{red}{(+3.36)}
        & \scriptsize\textcolor{red}{(+9.32)}
        & \scriptsize\textcolor{red}{(+23.72)}
        & \scriptsize\textcolor{blue1}{(-13.16)}
        & \scriptsize\textcolor{red}{(+6.43)} \\
        \bottomrule
    \end{tabular}
    }
}
\vspace{2pt}
\end{table*}

%% file: appendix.tex
\appendix

\section{Appendix}

\subsection{Limitation and Future Work}
\label{app:limitation}

We describe several limitations of our work along with corresponding directions for future work.
First, we adopt a linear mapping for \oursmethod{} and empirically validate its effectiveness across a wide range of benchmarks and models.
At the same time, our experiments in Appendix~\ref{app:additional_exp} show that more expressive mappings can further improve downstream accuracy, suggesting clear potential beyond the linear case.
Therefore, designing more expressive mappings that retain the efficiency of a closed-form solution would thus be a promising direction for future work.

Second, while our work investigates \emph{why} a linear mapping is a reasonable choice and \emph{how} effective it is, the question of \emph{when} the linear approximation is effective remains unexplored.
A deeper theoretical characterization of these conditions would not only strengthen the foundation of our work, but also guide the design of mappings tailored to different settings.

Third, our experiments primarily focus on tasks with relatively short outputs, such as single-token classification and scalar regression.
Extending \oursmethod{} to inference tasks that generate longer and more diverse token sequences -- such as multi-token reasoning, multi-turn conversation, or summarization -- would broaden its applicability and is a promising direction for future work.

Finally, although we demonstrate in Sec.~\ref{subsec:exp_results} that \oursmethod{} enables task vectors to be transferred from a larger model to a smaller one, this transfer currently relies on the assumption that the source and target models share the same tokenizer, which serves as a common coordinate system for the logit-space regression.
Extending \oursmethod{} to enable transfer across models with different tokenizers would substantially broaden its applicability and is another valuable direction for future work.

\subsection{Proof of Proposition~\ref{prop:kl_bound}}
\label{app:proof_kl_bound}

We provide detailed proof for the proposition~\ref{prop:kl_bound} presented in Sec.~\ref{sec:method}.

\begin{proof}
Recall that, throughout the paper, $P$ denotes the next-token distribution
\emph{restricted} to the label set $\mathcal{C}$ as defined in~\eqref{eq:restricted_prob}.
Accordingly, the $P_{\text{icl}}$ and $P_{\text{tv}}$ in the KL divergence in \oursmetric{}$(f)$ in~\eqref{eq:metric} are also computed over $\mathcal{C}$.

Given demonstrations $Z$ and a query $x$, recall that
\[
\vh_{\text{icl}} \coloneqq \vh_{\text{icl}}(x,Z), \qquad
\vh_{\text{tv}} \coloneqq \vh_{\text{tv}}(x,\vv)
\]
denote the hidden states of the last token in the final-layer under ICL and TV inference, respectively.

\textbf{Restricted logits.}
For the LM head $\mW_{\text{lm}}$, define $\mW_{\mathcal{C}} \in \mathbb{R}^{K\times d}$ as the submatrix of $\mW_{\text{lm}}$
formed by selecting the rows indexed by the label set $\mathcal{C}$
(where $K = |\mathcal{C}|$).
Then
\[
\|\mW_{\mathcal{C}}\|_2 \le \|\mW_{\text{lm}}\|_2 \le C_1,
\]
since removing rows cannot increase the spectral norm.

Define the \emph{restricted} logits
\[
\vz \coloneqq \mW_{\mathcal{C}}\, \vh_{\text{icl}}, \qquad
\tilde{\vz} \coloneqq \mW_{\mathcal{C}}\, \vh_{\text{tv}},
\]
and the corresponding restricted predictive distributions
\[
\vp \coloneqq \softmax(\vz), \qquad
\vq \coloneqq \softmax(\tilde{\vz}).
\]
Then the discrepancy for each query $x$ appearing in $\text{\oursmetric}(f)$ in~\eqref{eq:metric} is equal to $\KL(\vp\|\vq)$.

\textbf{Step 1: Bounding the restricted logit distance.}
By sub-multiplicativity and the spectral-norm bound,
\begin{equation}
\label{eq:restricted_logit_bound_app}
\|\vz-\tilde{\vz}\|_2
= \|\mW_{\mathcal{C}}(\vh_{\text{icl}}-\vh_{\text{tv}})\|_2
\le \|\mW_{\mathcal{C}}\|_2\,\|\vh_{\text{icl}}-\vh_{\text{tv}}\|_2
\le C_1\,\|\vh_{\text{icl}}-\vh_{\text{tv}}\|_2.
\end{equation}

\textbf{Step 2: Bounding the restricted KL divergence via Lipschitz log-softmax.}
Let $\phi(\cdot) \coloneqq \log\softmax(\cdot)$ denote the log-softmax map on $\mathbb{R}^K$.
By assumption, $\phi$ is $C_2$-Lipschitz in $\ell_2$:
\begin{equation}
\label{eq:lipschitz_logsoftmax_app}
\|\phi(\vz)-\phi(\tilde{\vz})\|_2 \le C_2 \|\vz-\tilde{\vz}\|_2.
\end{equation}
For any coordinate $i\in\{1,\dots,K\}$,
\[
|\log \vp_i - \log \vq_i|
= |\phi(\vz)_i - \phi(\tilde{\vz})_i|
\le \|\phi(\vz)-\phi(\tilde{\vz})\|_2
\le C_2 \|\vz-\tilde{\vz}\|_2.
\]
Therefore,
\begin{align}
\KL(\vp\|\vq)
&= \sum_{i=1}^K \vp_i(\log \vp_i - \log \vq_i)
\le \sum_{i=1}^K \vp_i\,|\log \vp_i - \log \vq_i| \nonumber\\
&\le \sum_{i=1}^K \vp_i \, (C_2 \|\vz-\tilde{\vz}\|_2)
= C_2 \|\vz-\tilde{\vz}\|_2. \label{eq:kl_to_logit_app}
\end{align}

\textbf{Step 3: Combining and taking expectation.}
Combining \eqref{eq:restricted_logit_bound_app} and \eqref{eq:kl_to_logit_app} yields
\[
\KL(\vp\|\vq) \le C_1 C_2 \, \|\vh_{\text{icl}}-\vh_{\text{tv}}\|_2.
\]
Taking expectation over $x\sim\mathcal{D}$, we have
%and applying Cauchy-Schwarz inequality,
\begin{align*}
\text{\oursmetric}(f)
= \mathbb{E}_{x\sim\mathcal{D}}\big[\KL(\vp\|\vq)\big]
&\le C_1 C_2 \, \mathbb{E}_{x\sim\mathcal{D}}\big[\|\vh_{\text{icl}}-\vh_{\text{tv}}\|_2\big] \\
&\le C_1 C_2 \, \sqrt{\mathbb{E}_{x\sim\mathcal{D}}\big[\|\vh_{\text{icl}}-\vh_{\text{tv}}\|_2^2\big]} \\
&= C_1 C_2 \sqrt{\mathcal{L}_{\text{MSE}}(f)}.
\end{align*}
This completes the proof.
\end{proof}

\newpage
\input{table/benchmark_details}
\subsection{Implementation Details on Evaluation}
\label{app:exp_details}

We provide detailed information about our evaluation setup.
Following experiments conducted in~\citet{li2024implicit}, we employ the same prompt templates and label sets, as summarized in Table~\ref{tab:benchmark_details}.
However, EmoC~\citep{chatterjee2019semeval} is excluded as it is no longer publicly available.
For labels consisting of multiple tokens, we use the probability of the first token when computing~\eqref{eq:restricted_prob}, following~\citet{hendel2023context, todd2023function, li2024implicit}.
All experiments were conducted using a single NVIDIA GPU (either RTX 6000 Ada Generation or GeForce RTX 4090).

All experiments are conducted with $k=30$ demonstrations $Z$, sampled from the training dataset.
Note that for classification tasks, the label set $\mathcal{C}$ contains $\lvert \mathcal{C} \rvert = K$ labels.
For both task vector extraction and ICL inference, we ensure that each label is represented by an equal number of examples in $Z$.
Specifically, we include the maximum number of examples per label such that all labels are equally represented while staying within the limit of $k$ demonstrations.
For example, in the case of AGNews where the number of classes $K$ is 4, we include 7 examples per label under the $k=30$ setting, resulting in a total of 28 demonstrations.
This uniform label distribution is maintained across all experiments involving demonstrations.

\subsection{Implementation Details on Baselines}
\label{app:baseline_methods}
In this section, we provide the details of each baseline along with our implementation.
For each baseline, we describe (1) the extraction phase, (2) the inference phase, and (3) the implementation details.
Note that these baselines typically require additional \emph{labeled} examples for layer or head selection (\eg Task Vector, Function Vector, and State Vector use a labeled validation set of 32 examples), whereas our method only uses \emph{unlabeled} queries.

\begin{itemize}
    \item \textbf{Task Vector}~\citep{hendel2023context}.
    In the extraction phase, task vectors are extracted from hidden states at the last token position.
    In the inference phase, the original hidden state of the model at a specific layer is replaced with the extracted vector (\ie $\vh_{\text{tv}} = \vv$).
    Following the original study, we select the optimal layer independently for each task using a validation set with 32 \emph{labeled} examples.
    \item \textbf{Function Vector}~\citep{todd2023function}.
    In the extraction phase, task vectors are extracted from the activations of attention heads that are selected based on their causal influence on increasing the probability of the correct label.
    In the inference phase, the task vector is added to the hidden state at a chosen injection layer.
    In our implementation, we follow the original paper and identify the top-10 attention heads based on their causal influence.
    We average their representations over 20 independent trials and select the optimal layer using a validation set with 32 labeled examples.
    Detailed implementation refers to the official repository at \url{https://github.com/ericwtodd/function_vectors}.
    \item \textbf{State Vector}~\citep{li2024context}.
    In the extraction phase, task vectors are formed by collecting attention activations at the separator tokens of each demonstration.
    The activations are gathered across the initial layers.
    In the inference phase, these stored activations are added to the model activations.
    We employ the inner optimization strategy from the original paper, which averages activations extracted from multiple demonstration examples.
    The optimal depth of the initial layers is selected for each task via a validation set with 32 labeled examples.
    The method is implemented using the official code at \url{https://github.com/HITsz-TMG/ICL-State-Vector}.
    \item \textbf{I2CL}~\citep{li2024implicit}.
    In the extraction phase, task vectors are extracted from the output activations of both attention and MLP modules at the last token position. These activations are averaged over each demonstration example.
    At inference time, a linear combination of these attention and MLP task vectors is additively injected into the residual stream at every layer.
    In our experiments, we use the initial coefficient values ($\lambda=0.1, \beta=1$) for the linear combination as suggested in the original paper.
    Code and detailed implementation instructions are available at \url{https://github.com/LzVv123456/I2CL}.
\end{itemize}

\subsection{Implementation Details on Regression Experiments}
\label{app:regression}
We provide the detailed experimental setup for the regression experiments reported in Table~\ref{tab:regression}.
Following prior work in the in-context learning literature~\citep{garg2022can,yang2025task}, we evaluate \oursmethod{} on synthetic regression tasks.
For each task instance, the model is given 30 demonstration pairs $(\vx, \vy)$ and is asked to predict the continuous scalar output for a query input. We use a simple textual template where each demonstration is formatted as two lines: $x: [x_1, \ldots, x_m]$ followed by $y: y_i$, with demonstrations separated by blank lines.

Each input vector is sampled as $\vx \sim \mathcal{N}(0, \mI_m)$ with $m=6$.
We consider two regression settings: linear regression, where $\vy = \vw^\top \vx$ for $\vw \in \mathbb{R}^d$, and ReLU~\citep{agarap2018deep} regression, where $\vy = \boldsymbol{\alpha}^\top \mathrm{ReLU}(\mV \vx)$ for $\mV \in \mathbb{R}^{r \times d}$, $\boldsymbol{\alpha} \in \mathbb{R}^r$, and $r=100$. For linear regression, the parameter is sampled as $w \sim \mathcal{N}(0, I_d)$, following the isotropic Gaussian prior of~\citet{garg2022can}. For ReLU regression, we sample $V \sim \mathcal{N}(0, I)$ and $\alpha \sim \mathcal{N}(0, \frac{1}{r} I)$. The performance is measured by mean squared error (MSE).

Unlike classification, where each label is a single token from a discrete label set, the label $\vy$ here is a real-valued scalar that the LLM must generate as a multi-token sequence (digits and decimal point).
To extend \oursmethod{} to this multi-token generation setting, we apply the linear mapping $\mW^\star$ at every generation step.
At each step, the model takes the query and the tokens generated so far as input, and computes the zero-shot hidden state $\vh_{\text{zs}}$ of the last token.
We then add the task vector $\mW^\star \vh_{\text{zs}}$ to it before feeding the result to the LM head.
The sampled token is appended to the input for the next step.
Note that $\mW^\star$ is estimated once during extraction and reused across all generation steps, preserving the inference efficiency.

\subsection{Implementation Details on Cross-Model Transfer Experiments}
\label{app:transfer}

We provide the detailed experimental setup and method for the cross-model transfer experiments reported in Table~\ref{tab:ltv_transfer}.
The proposed \oursmethod{} method in Sec.~\ref{sec:method} is defined within a single model.
In practice, however, when the capacity of the model is not enough, ICL alone may not extract sufficient task information from the demonstrations, leading to limited quality of the task vector.

We thus consider transferring task vectors extracted from a larger model -- which captures richer ICL representations from the same demonstrations -- to a smaller model, in order to improve its performance without requiring additional demonstrations or larger inference cost.
This transfer is non-trivial because the two models have different hidden state dimensions and different LM heads, so the linear mapping $\mW^\star$ in Sec.~\ref{sec:method} cannot be directly reused.

\paragraph{Method.}
We extend \oursmethod{} to the cross-model setting with a single modification.
We replace the hidden-space regression in~\eqref{eq:ridge} with a logit-space regression.
The intuition is that, while hidden state spaces differ across models, two models that share the same tokenizer share the vocabulary as a common coordinate system, making the logit space a natural target for alignment.
To be specific, let $\vz_{\mathrm{icl}}^L \coloneqq \mW_{\mathrm{lm}}^L \vh_{\mathrm{icl}}^L$ denote the logit produced by the large model under ICL, and $\vz_{\mathrm{zs}}^S \coloneqq \mW_{\mathrm{lm}}^S \vh_{\mathrm{zs}}^S$ the logit produced by the small model under zero-shot inference, where the superscripts $L$ and $S$ denote quantities from the large and small model, respectively.

We solve a ridge regression analogous to~\eqref{eq:ridge}, mapping $\vh_{\mathrm{zs}}^S$ to the logit-space difference $\vz_{\mathrm{icl}}^L - \vz_{\mathrm{zs}}^S$, which yields a closed-form solution $\tilde{\mW}^\star \in \mathbb{R}^{N_\mathcal{U} \times d_S}$, where $d_{S}$ is the hidden dimension of the small model and $N_\mathcal{U}$ is the vocabulary size.
At inference, the corrected logit is $\vz_{\mathrm{zs}}^S(\vx_{\mathrm{test}}) + \tilde{\mW}^\star \vh_{\mathrm{zs}}^S(\vx_{\mathrm{test}})$.
When the large and small models are identical, this reduces to the original formulation in~\eqref{eq:closed_form} up to the linear LM head transformation, ensuring that our extension preserves the design principle of \oursmethod{}.

\paragraph{Setup.}
We use Qwen2.5-72B as the large model and Qwen2.5-7B as the small model, which share the same tokenizer.
We follow the eight classification benchmarks and prompt templates used in the main experiments in Sec.~\ref{sec:exp}.
To target the regime where the small model cannot extract sufficient task information from demonstrations alone, we use $k=10$ demonstrations, compared to $k=30$ in the main experiments.
For DBPedia, we use $k=14$ since it has 14 classes.
We use $N=256$ unlabeled training queries and $\lambda=5.0$ for the ridge regression, following the default values in the main experiments.
We report the average accuracy over five independent runs.

\subsection{Additional Experiments}
\label{app:additional_exp}

\input{table/additional_comparison_with_baseline}

\paragraph{Extended results of Table~\ref{tab:comparison_with_baseline}.}
Table~\ref{tab:additional_with_baseline} provides the complete results across all five LLMs evaluated in this work. Consistent with the trend reported in the main body, \oursmethod{} achieves the highest average accuracy on all models, demonstrating that its effectiveness generalizes across different model families and scales.

\paragraph{Design choices for the task vector.}
In Sec.~\ref{subsec:ltv_rationale} of the main body, we propose to solve the proxy problem in~\eqref{eq:lin_opt} by minimizing $\mathcal{L}_{\text{MSE}}$ in~\eqref{eq:mse}.
We interpret the proxy problem as finding a task vector $\vv = f(x,Z)$ that compensates for the effect of demonstrations in the hidden state. Specifically, the goal of using task vectors is to approximate the hidden state difference $\vh_{\text{icl}} - \vh_{\text{zs}}$ between ICL and zero-shot inference
\begin{equation*}
    \min_f \mathbb{E}_{x\sim \mathcal{D}} \Big[ \big\| \vh_{\text{icl}}(x, Z) - \vh_{\text{zs}}(x) - f(x,Z) \big\|_2^2 \Big].
\end{equation*}

In the main paper, we consider $f$ as a mapping from $\vh_{\text{zs}}$ to the target $\vh_{\text{icl}} - \vh_{\text{zs}}$ and adopt a linear mapping $f(x,Z) = \mW \vh_{\text{zs}}(x)$ for its closed-form solution
\begin{equation*}
    \min_{\mW} \mathbb{E}_{x\sim \mathcal{D}} \Big[ \big\| \vh_{\text{icl}}(x, Z) - \vh_{\text{zs}}(x) - \mW \vh_{\text{zs}}(x) \big\|_2^2 \Big].
\end{equation*}
Here, we compare this choice against alternative designs and validate its effectiveness.

\label{app:design_choices}
Given the strong performance of our proposed \oursmethod{} method as shown in Sec.~\ref{subsec:exp_results}, the choice of employing a linear mapping naturally raises two questions: (1) Is it necessary to model $f$ as a mapping, the output of which is dependent on the input $\vh_{\text{zs}}(x)$, or does modeling $f$ as a simple constant vector suffice? (2) How effective is our linear mapping compared to a more expressive alternative trained via gradient descent?
To answer these two questions, we compare our linear mapping approach against a constant mapping baseline and a more expressive alternative (2-layer MLP-based mapping):
\begin{itemize}[leftmargin=*, itemsep=2pt]
    \item \textbf{Constant Mapping:} Modeling the output of $f$ as a fixed vector $\vc \in \mathbb{R}^d$ independent of the query $x$, \ie $f(x,Z) = \vc$.
    %This tests whether a fixed offset suffices without leveraging input-dependent information (\eg $\vh_{\text{zs}}(x)$ in our proposed LTV method).
    The optimal solution is given by the empirical mean of $\vh_{\text{icl}} - \vh_{\text{zs}}$, represented as 
    \begin{equation*}
        \vc^* = \tfrac{1}{N} \sum_{i=1}^{N} \big( \vh_{\text{icl}}(x_i, Z) - \vh_{\text{zs}}(x_i) \big)
    \end{equation*}
    where $N$ unlabeled train queries $\{x_i\}_{i=1}^N$ are used.
    \item \textbf{Linear Mapping (Ours):} Mapping from $\vh_{\text{zs}}$ to the target $\vh_{\text{icl}}-\vh_{\text{zs}}$ using a linear transformation, \ie $f(x,Z) = \mW \vh_{\text{zs}}(x)$, where $\mW \in \mathbb{R}^{d \times d}$.
    This has a closed-form solution when we formulate it as a ridge regression problem; see Section~\ref{subsec:ltv}.
    \item \textbf{2-layer MLP-based Mapping:} Mapping from $\vh_{\text{zs}}$ to the target $\vh_{\text{icl}}-\vh_{\text{zs}}$ using a 2-layer ReLU~\citep{agarap2018deep} network: $f(x,Z) = \mW_2 \, \sigma_{\text{ReLU}}(\mW_1 \vh_{\text{zs}}(x))$, where $\mW_1, \mW_2 \in \mathbb{R}^{d \times d}$ are learnable parameters and $\sigma_{\text{ReLU}}$ denotes the ReLU activation function.
    This is more expressive than the linear mapping, but requires iterative optimization.
    We train this network by minimizing the $\mathcal{L}_{\text{MSE}}$ loss using AdamW~\citep{loshchilov2017decoupled} with learning rate $10^{-3}$, cosine scheduler with warmup ratio 0.1, batch size 8, and 20 epochs over $N=256$ unlabeled train queries with $k=30$ demonstrations $Z$.
\end{itemize}

\input{table/mapping}
\input{figure/mlp_comparison}

As shown in Table~\ref{tab:mapping}, the constant mapping achieves only 55.72\% accuracy with $\mathcal{L}_{\text{MSE}}=2.84$, performing significantly worse than our proposed linear mapping (75.21\% accuracy, $\mathcal{L}_{\text{MSE}}=1.22$) by a margin of 19.49\% in accuracy and 1.62 in MSE.
This confirms that a query-dependent mapping is essential for approximating the effect of demonstrations, as a fixed hidden state vector cannot capture how this effect varies across queries.

On the other hand, the 2-layer MLP-based mapping further reduces $\mathcal{L}_{\text{MSE}}$ to 0.64 and improves accuracy to 77.74\%, indicating that more expressive mappings can yield additional gains within our framework.
This observation suggests that our optimization formulation in~\eqref{eq:opt_mse} naturally extends beyond the linear case.
In this paper, we focus on the linear mapping as the simplest instantiation that admits an efficient closed-form solution and preserves the training-free nature of ICL, and leave non-linear extensions that require iterative optimization for future work.
Below, we provide a preliminary cost-benefit analysis of such non-linear mappings when training is allowed, to characterize the trade-off between expressiveness and extraction cost.

\vspace{-1pt}
\paragraph{Further discussion on scaling of the non-linear mappings.}
To further explore how non-linear mappings scale with model capacity, we train deeper MLP-based mappings (4, 8, and 16 layers), each with skip connections every two layers to preserve the residual stream.
Fig.~\ref{fig:mlp_comparison} compares \oursmethod{} and these MLP-based variants in terms of both extraction (training for MLP) time cost and classification accuracy.
While deeper MLPs achieve higher absolute accuracy (up to 79.36\% with 16 layers), the gain comes with substantially increased extraction cost: a 16-layer MLP requires roughly $5.5\times$ the extraction time of \oursmethod{} for 4.15\% absolute accuracy improvement.

\input{figure/peft_comparison}

\input{table/comparison_10_shot}
\input{table/comparison_50_shot}

\vspace{-1pt}
\paragraph{Further discussion on comparison with PEFT-based methods.}
We further examine alternative ways of minimizing our proposed surrogate $\mathcal{L}_{\text{MSE}}$ by employing parameter-efficient fine-tuning (PEFT) methods, instead of the closed-form linear mapping used in \oursmethod{}.
Specifically, we consider Prompt Tuning~\citep{lester2021power} and LoRA~\citep{hu2022lora}, two widely used PEFT approaches that introduce a small number of trainable parameters while keeping the backbone frozen, and train them to minimize the same $\mathcal{L}_{\text{MSE}}$ objective in~\eqref{eq:mse}.
We evaluate all methods under three demonstration budgets ($k=10, 30, 50$), and report the extraction (training for PEFT) time cost and the classification accuracy in Fig.~\ref{fig:peft_comparison}.
In the few-shot regime ($k=10$), \oursmethod{} outperforms LoRA in both accuracy and extraction efficiency.
When more demonstrations are available ($k=30, 50$), LoRA achieves higher accuracy than \oursmethod{}, but at roughly $3$ times the extraction cost.
These results indicate that our method provides a competitive edge compared to PEFT-based methods, particularly in the low-data regime.

\vspace{-1pt}
\paragraph{Effect of the number of demonstrations.}
To verify the robustness of \oursmethod{} to the number of demonstrations $k$, we also report results under $k=10$ (Table~\ref{tab:10shot}) and $k=50$ (Table~\ref{tab:50shot}) on LLaMA-3.1-8B.
In both settings, \oursmethod{} consistently achieves the highest average accuracy among task vector baselines across both settings, demonstrating its robustness to the number of demonstrations $k$.

%% file: table/benchmark_details.tex
\begin{table}[t]
    \centering
    \small
    \caption{Overview of the benchmark datasets used in our experiments, including their prompting templates, label sets, and licenses. \{Sentence\} and \{Label\} are placeholders for input query $x$ and label $y$, respectively. All datasets are publicly available and used under their original licenses or standard research-use terms.}
    \label{tab:benchmark_details}
    \resizebox{1.0\linewidth}{!}{
    \begin{tabular}{l m{3cm} m{5.8cm} m{3cm}}
    \toprule
    \textbf{Dataset} & \textbf{Prompt Template} & \textbf{Labels} & \textbf{License} \\
    \midrule

    AGNews     
        & News: \{Sentence\} \newline Type: \{Label\}
        & World / Sports / Business / Technology
        & Publicly available for research purposes\\

    \midrule

    DBPedia    
        & Input: \{Sentence\} \newline Label: \{Label\}
        & company / school / artist / athlete / politics / transportation / building / nature / village / animal / plant / album / film / book
        & CC BY-SA 3.0\\

    \midrule

    HateSpeech18 
        & Text: \{Sentence\} \newline Label: \{Label\}
        & neutral / hate
        & CC BY-SA 3.0\\

    \midrule

    MR        
        & Review: \{Sentence\} \newline Sentiment: \{Label\}
        & negative / positive
        & Publicly available for research purposes\\

    \midrule

    SST-2     
        & Review: \{Sentence\} \newline Sentiment: \{Label\}
        & negative / positive
        & CC BY 4.0 \\

    \midrule

    SST-5     
        & Sentence: \{Sentence\} \newline Sentiment: \{Label\}
        & terrible / negative / neutral / positive / great
        & CC BY 4.0 \\

    \midrule

    SUBJ      
        & Sentence: \{Sentence\} \newline Label: \{Label\}
        & objective / subjective
        & Publicly available for research purposes \\

    \midrule

    TREC      
        & Question: \{Sentence\} \newline Answer Type: \{Label\}
        & Abbreviation / Entity / Person / Location / Number
        & Publicly available for research purposes \\

    \bottomrule
    \end{tabular}
    }
\vspace{5pt}
\end{table}

%% file: table/additional_comparison_with_baseline.tex
\begin{table*}[t]
    \LARGE
    \centering
    \caption{
Complete results for Table~\ref{tab:comparison_with_baseline} in the main paper. We report the classification accuracy (\%) of LTV and four baseline task vector methods on eight benchmarks across all five LLMs evaluated in this work.
The accuracy of Zero-shot and ICL inference is provided as reference inference modes.
\oursmethod{} consistently achieves the highest average accuracy (Avg.) across all models.
}
    \resizebox{1.0\textwidth}{!}{
    {\setlength{\tabcolsep}{6pt}
    \begin{tabular}{l l| c c c c c c c c| c}
        \toprule
        Models & Methods
        & AGNews & DBPedia & HateSpeech18 & MR & SST-2 & SST-5 & Subj & TREC & Avg. \\
        \midrule

        % -------------------- Qwen-2.5-7B --------------------
        \multirow{7}{*}{\shortstack{\textit{Qwen-2.5-7B}\\{\large\citep{qwen2024qwen2}}}}
        & Zero-shot
          & 73.80 & 74.80 & 58.40 & 68.00 & 54.80 & 20.20 & 48.80 & 66.40 & 58.15 \\
        & ICL
          & 85.40 {\large ($\pm$1.6)}
          & 97.32 {\large ($\pm$0.3)}
          & 81.76 {\large ($\pm$1.3)}
          & 93.44 {\large ($\pm$0.5)}
          & 94.40 {\large ($\pm$0.3)}
          & 48.56 {\large ($\pm$2.2)}
          & 89.68 {\large ($\pm$1.9)}
          & 87.32 {\large ($\pm$2.6)}
          & 84.74 \\
          \cmidrule(lr){2-11}
        & Function Vector{\large~\citep{todd2023function}}
          & 73.24 {\large ($\pm$0.4)}
          & 75.56 {\large ($\pm$0.5)}
          & 58.64 {\large ($\pm$0.5)}
          & 58.96 {\large ($\pm$1.2)}
          & 58.48 {\large ($\pm$1.8)}
          & 20.20 {\large ($\pm$0.0)}
          & 58.32 {\large ($\pm$3.5)}
          & 68.08 {\large ($\pm$0.7)}
          & 58.94 \\
        & Task Vector{\large~\citep{hendel2023context}}
          & 74.76 {\large ($\pm$2.1)}
          & \underline{78.52} {\large ($\pm$1.8)}
          & \underline{63.92} {\large ($\pm$3.3)}
          & \underline{87.28} {\large ($\pm$2.9)}
          & \textbf{93.76} {\large ($\pm$1.2)}
          & 23.52 {\large ($\pm$4.7)}
          & 54.96 {\large ($\pm$6.9)}
          & \underline{74.12} {\large ($\pm$3.2)}
          & 68.86 \\
        & State Vector{\large~\citep{li2024context}}
          & \textbf{78.52} {\large ($\pm$0.3)}
          & 76.88 {\large ($\pm$0.6)}
          & 57.84 {\large ($\pm$1.3)}
          & 85.84 {\large ($\pm$2.5)}
          & 86.64 {\large ($\pm$3.6)}
          & \underline{30.60} {\large ($\pm$0.2)}
          & \underline{70.12} {\large ($\pm$2.7)}
          & 71.64 {\large ($\pm$6.8)}
          & \underline{69.76} \\
        & I2CL{\large~\citep{li2024implicit}}
          & 75.60 {\large ($\pm$2.3)}
          & 75.68 {\large ($\pm$0.4)}
          & 62.80 {\large ($\pm$0.7)}
          & 79.24 {\large ($\pm$0.4)}
          & 87.36 {\large ($\pm$0.3)}
          & 22.96 {\large ($\pm$0.3)}
          & 42.72 {\large ($\pm$0.3)}
          & 56.68 {\large ($\pm$1.8)}
          & 62.88 \\
        & \bg{\textbf{\oursmethod{} (Ours)}}
          & \bg{\underline{77.32} {\large ($\pm$6.7)}}
          & \bg{\textbf{90.68} {\large ($\pm$2.2)}}
          & \bg{\textbf{75.24} {\large ($\pm$1.2)}}
          & \bg{\textbf{90.56} {\large ($\pm$1.3)}}
          & \bg{\underline{91.76} {\large ($\pm$1.7)}}
          & \bg{\textbf{31.32} {\large ($\pm$4.2)}}
          & \bg{\textbf{80.68} {\large ($\pm$5.6)}}
          & \bg{\textbf{83.40} {\large ($\pm$6.2)}}
          & \bg{\textbf{77.62}} \\
        \midrule

        % -------------------- Qwen-3-8B --------------------
        \multirow{7}{*}{\shortstack{\textit{Qwen-3-8B}\\{\large\citep{yang2025qwen3}}}}
        & Zero-shot
          & 57.40 & 74.60 & 71.60 & 92.40 & 92.20 & 34.80 & 66.40 & 68.20 & 69.70 \\
        & ICL
          & 86.56 {\large ($\pm$2.3)}
          & 97.31 {\large ($\pm$0.6)}
          & 83.44 {\large ($\pm$1.0)}
          & 93.28 {\large ($\pm$0.5)}
          & 94.48 {\large ($\pm$0.5)}
          & 53.08 {\large ($\pm$2.3)}
          & 93.36 {\large ($\pm$0.7)}
          & 85.24 {\large ($\pm$4.8)}
          & 85.84 \\
          \cmidrule(lr){2-11}
        & Function Vector{\large~\citep{todd2023function}}
          & 61.68 {\large ($\pm$0.8)}
          & 77.74 {\large ($\pm$0.3)}
          & 71.56 {\large ($\pm$0.4)}
          & 90.24 {\large ($\pm$1.0)}
          & 91.36 {\large ($\pm$0.2)}
          & 34.88 {\large ($\pm$0.1)}
          & 67.20 {\large ($\pm$2.2)}
          & 75.56 {\large ($\pm$0.7)}
          & 71.28 \\
        & Task Vector{\large~\citep{hendel2023context}}
          & \underline{79.24} {\large ($\pm$7.0)}
          & 81.49 {\large ($\pm$1.4)}
          & \underline{74.20} {\large ($\pm$0.9)}
          & 92.36 {\large ($\pm$0.2)}
          & \underline{93.16} {\large ($\pm$0.6)}
          & 34.08 {\large ($\pm$0.4)}
          & \underline{76.00} {\large ($\pm$2.5)}
          & 73.56 {\large ($\pm$7.9)}
          & 75.51 \\
        & State Vector{\large~\citep{li2024context}}
          & \textbf{81.24} {\large ($\pm$4.2)}
          & \underline{84.40} {\large ($\pm$0.9)}
          & 69.92 {\large ($\pm$2.0)}
          & \textbf{92.44} {\large ($\pm$0.1)}
          & 92.24 {\large ($\pm$0.4)}
          & 33.92 {\large ($\pm$1.1)}
          & 75.56 {\large ($\pm$6.8)}
          & 78.32 {\large ($\pm$2.6)}
          & \underline{76.01} \\
        & I2CL{\large~\citep{li2024implicit}}
          & 62.92 {\large ($\pm$0.2)}
          & 74.51 {\large ($\pm$0.3)}
          & 70.12 {\large ($\pm$0.6)}
          & \underline{92.40} {\large ($\pm$0.1)}
          & \textbf{93.24} {\large ($\pm$0.2)}
          & \underline{34.92} {\large ($\pm$0.1)}
          & 59.76 {\large ($\pm$0.3)}
          & \underline{78.36} {\large ($\pm$0.9)}
          & 70.78 \\
        & \bg{\textbf{\oursmethod{} (Ours)}}
          & \bg{77.80 {\large ($\pm$2.1)}}
          & \bg{\textbf{94.20} {\large ($\pm$1.7)}}
          & \bg{\textbf{76.04} {\large ($\pm$3.4)}}
          & \bg{86.72 {\large ($\pm$6.3)}}
          & \bg{90.40 {\large ($\pm$2.2)}}
          & \bg{\textbf{39.40} {\large ($\pm$7.1)}}
          & \bg{\textbf{90.00} {\large ($\pm$2.2)}}
          & \bg{\textbf{81.96} {\large ($\pm$4.6)}}
          & \bg{\textbf{79.57}} \\
        \midrule

        % -------------------- LLaMA-2-7B --------------------
        \multirow{7}{*}{\shortstack{\textit{LLaMA-2-7B}\\{\large\citep{touvron2023llama}}}}
        & Zero-shot
          & 72.40 & 73.60 & 54.00 & 73.00 & 80.40 & 28.40 & 51.60 & 50.00 & 60.43 \\
        & ICL
          & 84.48 {\large ($\pm$5.1)}
          & 94.44 {\large ($\pm$2.7)}
          & 64.60 {\large ($\pm$9.0)}
          & 93.72 {\large ($\pm$0.6)}
          & 93.52 {\large ($\pm$1.2)}
          & 42.88 {\large ($\pm$3.3)}
          & 54.48 {\large ($\pm$6.5)}
          & 79.36 {\large ($\pm$4.0)}
          & 75.94 \\
          \cmidrule(lr){2-11}
        & Function Vector{\large~\citep{todd2023function}}
          & 71.64 {\large ($\pm$0.7)}
          & 73.88 {\large ($\pm$1.2)}
          & 55.80 {\large ($\pm$0.6)}
          & 74.72 {\large ($\pm$0.9)}
          & 78.60 {\large ($\pm$0.1)}
          & 27.84 {\large ($\pm$0.8)}
          & 50.04 {\large ($\pm$0.2)}
          & 59.28 {\large ($\pm$4.2)}
          & 61.48 \\
        & Task Vector{\large~\citep{hendel2023context}}
          & \underline{76.68} {\large ($\pm$2.6)}
          & 78.68 {\large ($\pm$0.6)}
          & \textbf{71.28} {\large ($\pm$2.8)}
          & 78.64 {\large ($\pm$6.8)}
          & 81.60 {\large ($\pm$5.6)}
          & 32.24 {\large ($\pm$3.6)}
          & 48.72 {\large ($\pm$2.6)}
          & 58.84 {\large ($\pm$6.8)}
          & 65.84 \\
        & State Vector{\large~\citep{li2024context}}
          & 74.36 {\large ($\pm$8.4)}
          & \textbf{90.36} {\large ($\pm$1.0)}
          & \underline{64.12} {\large ($\pm$3.7)}
          & \underline{88.48} {\large ($\pm$3.8)}
          & \underline{82.56} {\large ($\pm$7.2)}
          & 33.68 {\large ($\pm$8.4)}
          & 50.20 {\large ($\pm$3.7)}
          & \underline{62.20} {\large ($\pm$7.5)}
          & \underline{68.25} \\
        & I2CL{\large~\citep{li2024implicit}}
          & 72.96 {\large ($\pm$1.1)}
          & 76.00 {\large ($\pm$0.1)}
          & 51.48 {\large ($\pm$0.4)}
          & 74.16 {\large ($\pm$1.0)}
          & 80.16 {\large ($\pm$0.1)}
          & \underline{34.72} {\large ($\pm$0.6)}
          & \textbf{51.16} {\large ($\pm$0.1)}
          & 60.24 {\large ($\pm$0.5)}
          & 62.61 \\
        & \bg{\textbf{\oursmethod{} (Ours)}}
          & \bg{\textbf{83.96} {\large ($\pm$4.4)}}
          & \bg{\underline{88.84} {\large ($\pm$1.6)}}
          & \bg{55.72 {\large ($\pm$5.2)}}
          & \bg{\textbf{91.20} {\large ($\pm$2.5)}}
          & \bg{\textbf{90.12} {\large ($\pm$0.9)}}
          & \bg{\textbf{38.68} {\large ($\pm$2.0)}}
          & \bg{\underline{50.32} {\large ($\pm$0.3)}}
          & \bg{\textbf{70.68} {\large ($\pm$5.4)}}
          & \bg{\textbf{71.19}} \\
        \midrule

        % -------------------- LLaMA-2-13B --------------------
        \multirow{7}{*}{\shortstack{\textit{LLaMA-2-13B}\\{\large\citep{touvron2023llama}}}}
        & Zero-shot
          & 73.80 & 76.20 & 55.20 & 61.20 & 66.60 & 20.80 & 49.80 & 69.60 & 59.15 \\
        & ICL
          & 88.04 {\large ($\pm$1.7)}
          & 96.88 {\large ($\pm$1.1)}
          & 77.40 {\large ($\pm$2.2)}
          & 94.44 {\large ($\pm$0.5)}
          & 94.80 {\large ($\pm$0.7)}
          & 46.52 {\large ($\pm$2.7)}
          & 82.28 {\large ($\pm$7.1)}
          & 83.36 {\large ($\pm$5.2)}
          & 82.97 \\
          \cmidrule(lr){2-11}
        & Function Vector{\large~\citep{todd2023function}}
          & 74.00 {\large ($\pm$1.2)}
          & 76.40 {\large ($\pm$7.5)}
          & 54.08 {\large ($\pm$0.4)}
          & 61.84 {\large ($\pm$3.4)}
          & 72.44 {\large ($\pm$6.0)}
          & 21.40 {\large ($\pm$1.2)}
          & 50.00 {\large ($\pm$0.0)}
          & 69.96 {\large ($\pm$1.1)}
          & 60.02 \\
        & Task Vector{\large~\citep{hendel2023context}}
          & 79.84 {\large ($\pm$2.9)}
          & 80.72 {\large ($\pm$1.9)}
          & \underline{71.48} {\large ($\pm$6.7)}
          & 83.16 {\large ($\pm$0.8)}
          & 82.80 {\large ($\pm$4.2)}
          & 33.64 {\large ($\pm$2.1)}
          & \underline{49.72} {\large ($\pm$0.1)}
          & \underline{73.84} {\large ($\pm$5.3)}
          & 69.40 \\
        & State Vector{\large~\citep{li2024context}}
          & \underline{84.64} {\large ($\pm$4.0)}
          & \underline{89.48} {\large ($\pm$4.9)}
          & 58.80 {\large ($\pm$8.8)}
          & \underline{89.20} {\large ($\pm$2.8)}
          & \underline{87.20} {\large ($\pm$3.6)}
          & \underline{36.08} {\large ($\pm$3.0)}
          & 49.40 {\large ($\pm$0.6)}
          & 66.96 {\large ($\pm$1.0)}
          & \underline{70.22} \\
        & I2CL{\large~\citep{li2024implicit}}
          & 78.88 {\large ($\pm$0.3)}
          & 79.00 {\large ($\pm$0.5)}
          & 54.48 {\large ($\pm$0.1)}
          & 60.68 {\large ($\pm$0.3)}
          & 64.80 {\large ($\pm$0.4)}
          & 25.96 {\large ($\pm$0.1)}
          & 50.00 {\large ($\pm$0.0)}
          & 69.12 {\large ($\pm$0.4)}
          & 60.37 \\
        & \bg{\textbf{\oursmethod{} (Ours)}}
          & \bg{\textbf{86.68} {\large ($\pm$1.7)}}
          & \bg{\textbf{93.20} {\large ($\pm$0.5)}}
          & \bg{\textbf{72.08} {\large ($\pm$2.2)}}
          & \bg{\textbf{90.20} {\large ($\pm$2.0)}}
          & \bg{\textbf{88.96} {\large ($\pm$2.1)}}
          & \bg{\textbf{41.88} {\large ($\pm$2.2)}}
          & \bg{\textbf{78.76} {\large ($\pm$2.7)}}
          & \bg{\textbf{83.92} {\large ($\pm$6.3)}}
          & \bg{\textbf{79.46}} \\
        \midrule

        % -------------------- LLaMA-3.1-8B --------------------
        \multirow{7}{*}{\shortstack{\textit{LLaMA-3.1-8B}\\{\large\citep{dubey2024llama}}}}
        & Zero-shot
          & 75.00 & 69.00 & 60.60 & 82.20 & 86.80 & 25.60 & 59.20 & 49.40 & 63.48 \\
        & ICL
          & 87.16 {\large ($\pm$1.2)}
          & 97.68 {\large ($\pm$0.8)}
          & 74.32 {\large ($\pm$8.1)}
          & 94.52 {\large ($\pm$0.4)}
          & 94.20 {\large ($\pm$1.1)}
          & 48.32 {\large ($\pm$1.1)}
          & 85.96 {\large ($\pm$5.3)}
          & 79.08 {\large ($\pm$7.3)}
          & 82.66 \\
          \cmidrule(lr){2-11}
        & Function Vector{\large~\citep{todd2023function}}
          & 76.16 {\large ($\pm$0.1)}
          & 69.44 {\large ($\pm$1.0)}
          & 63.00 {\large ($\pm$0.2)}
          & 83.24 {\large ($\pm$0.3)}
          & 87.32 {\large ($\pm$0.6)}
          & 26.20 {\large ($\pm$1.3)}
          & 59.52 {\large ($\pm$0.2)}
          & \underline{69.12} {\large ($\pm$0.6)}
          & 66.75 \\
        & Task Vector{\large~\citep{hendel2023context}}
          & \underline{81.36} {\large ($\pm$3.7)}
          & \underline{83.24} {\large ($\pm$1.5)}
          & \underline{67.64} {\large ($\pm$2.0)}
          & 83.48 {\large ($\pm$3.5)}
          & 87.88 {\large ($\pm$0.3)}
          & 34.76 {\large ($\pm$1.9)}
          & 61.12 {\large ($\pm$1.4)}
          & 66.48 {\large ($\pm$1.0)}
          & \underline{70.75} \\
        & State Vector{\large~\citep{li2024context}}
          & 80.28 {\large ($\pm$4.3)}
          & 80.80 {\large ($\pm$1.5)}
          & 65.40 {\large ($\pm$1.3)}
          & \underline{85.96} {\large ($\pm$4.9)}
          & 84.12 {\large ($\pm$0.6)}
          & \underline{36.60} {\large ($\pm$1.6)}
          & \underline{62.52} {\large ($\pm$3.5)}
          & 67.28 {\large ($\pm$1.9)}
          & 70.37 \\
        & I2CL{\large~\citep{li2024implicit}}
          & 76.76 {\large ($\pm$0.4)}
          & 72.56 {\large ($\pm$0.4)}
          & 62.24 {\large ($\pm$0.5)}
          & 85.24 {\large ($\pm$0.3)}
          & \textbf{90.80} {\large ($\pm$0.2)}
          & 32.48 {\large ($\pm$0.4)}
          & 62.28 {\large ($\pm$0.1)}
          & 49.00 {\large ($\pm$0.5)}
          & 66.42 \\
        & \bg{\textbf{\oursmethod{} (Ours)}}
          & \bg{\textbf{82.84} {\large ($\pm$2.5)}}
          & \bg{\textbf{93.36} {\large ($\pm$1.4)}}
          & \bg{\textbf{70.44} {\large ($\pm$6.0)}}
          & \bg{\textbf{88.68} {\large ($\pm$1.0)}}
          & \bg{\underline{90.08} {\large ($\pm$1.2)}}
          & \bg{\textbf{38.20} {\large ($\pm$3.4)}}
          & \bg{\textbf{67.16} {\large ($\pm$11.4)}}
          & \bg{\textbf{72.88} {\large ($\pm$7.2)}}
          & \bg{\textbf{75.46}} \\
        \bottomrule
    \end{tabular}
    }
    }
    \label{tab:additional_with_baseline}
\end{table*}

%% file: table/mapping.tex
\begin{table}[t]
\centering
\caption{Comparison of design choices for the task vector $f(x,Z)$ that maps $\vh_{\text{zs}}(x)$ to the estimated hidden state difference $\vh_{\text{icl}} - \vh_{\text{zs}}$.
We compare three ways of designing such mapping: the \textit{constant mapping} uses a fixed vector $f(x,Z)=\vc$, the \textit{linear mapping} which uses a linear transformation $f(x,Z)=\mW \vh_{\text{zs}}(x)$, and the \textit{2-layer MLP-based mapping} which uses a ReLU network $f(x,Z)=\mW_2 \sigma_{\text{ReLU}}(\mW_1 \vh_{\text{zs}}(x))$. Note that the \textit{linear mapping} is used in our proposed LTV method in Sec.~\ref{sec:method}. 
We report the accuracy averaged  over 8 classification benchmarks used in Sec.~\ref{sec:exp}, our proposed \oursmetric{} metric and $\mathcal{L}_{\text{MSE}}$.
Experiments are conducted on LLaMA-3.1-8B~\citep{dubey2024llama}.}
\setlength{\tabcolsep}{15pt}
\resizebox{\linewidth}{!}
{
\begin{tabular}{llccc}
\toprule
Design Choices & $f(x,Z)$ & Avg. Acc. $\uparrow$ & \oursmetric{} $\downarrow$ & $\mathcal{L}_{\text{MSE}}$ $\downarrow$ \\
\midrule
Constant Mapping  & $\vc$ & 55.72 & 0.694 & 2.84 \\
\cellcolor{gray!15}Linear Mapping (Ours) & \cellcolor{gray!15}$\mW \vh_{\text{zs}}(x)$ & \cellcolor{gray!15}75.21 & \cellcolor{gray!15}0.147 & \cellcolor{gray!15}1.22 \\
2-layer MLP based Mapping & $\mW_2 \sigma_{\text{ReLU}}(\mW_1 \vh_{\text{zs}}(x))$ & 77.74 & 0.132 & 0.64 \\
\bottomrule
\end{tabular}
}
\label{tab:mapping}
\end{table}

%% file: figure/mlp_comparison.tex
\begin{figure*}[t]
    \vspace{5pt}
    \centering
\includegraphics[width=0.85\textwidth]{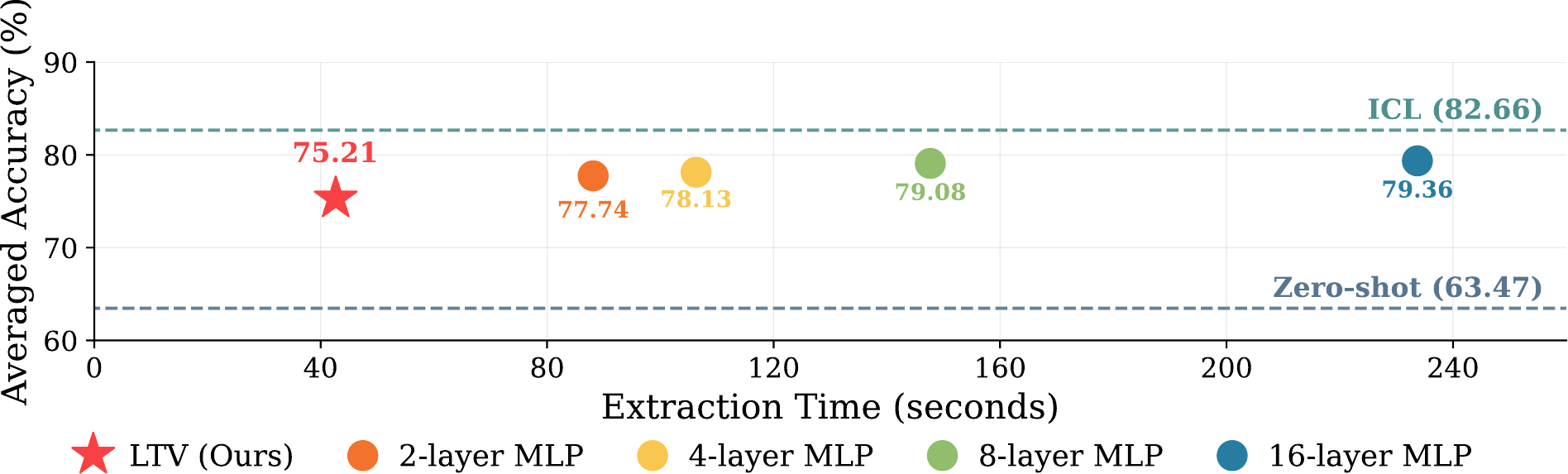}
    \caption{
    Extraction time cost versus downstream accuracy of \oursmethod{} and MLP-based mappings with varying depth.
    All methods are tested on LLaMA-3.1-8B, and accuracy is averaged over eight classification benchmarks used in the main paper in Sec.~\ref{sec:exp}.
    }

\label{fig:mlp_comparison}
\end{figure*}

%% file: figure/peft_comparison.tex
\begin{figure*}[t]
    \centering
\includegraphics[width=0.95\textwidth]{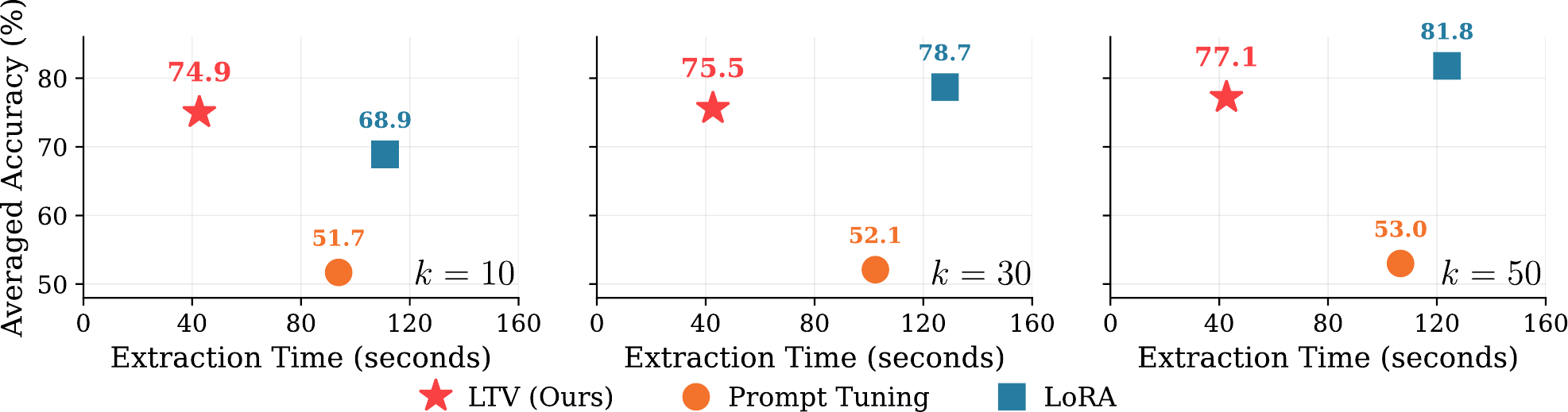}
    \caption{
    Extraction time cost versus downstream accuracy of \oursmethod{} and PEFT-based methods (prompt tuning~\citep{lester2021power} and LoRA~\citep{hu2022lora}), evaluated under three numbers of demonstrations ($k=10, 30, 50$).
    All methods are tested on LLaMA-3.1-8B, and accuracy is averaged over eight classification benchmarks used in the main paper in Sec.~\ref{sec:exp}.
}
\label{fig:peft_comparison}
\end{figure*}

%% file: table/comparison_10_shot.tex
\begin{table*}[t]
\centering
    \caption{
    Extended results of Table~\ref{tab:comparison_with_baseline} on LLaMA-3.1-8B under the number of demonstrations $k=10$ setting.
    The experimental setup is otherwise identical to Table~\ref{tab:comparison_with_baseline}.
}
\label{tab:10shot}
\resizebox{\textwidth}{!}{
{\setlength{\tabcolsep}{12pt}
\begin{tabular}{l| c c c c c c c c| c}
\toprule
Methods
& AGNews & DBpedia & HateSpeech18 & MR & SST-2 & SST-5 & Subj & TREC & Avg. \\
    \midrule
    
        Zero-shot
        & 75.00 & 69.00 & 60.60 & 82.20 & 86.80 & 25.60 & 59.20 & 49.40 & 63.48 \\
        
        ICL
        & 87.52 & 96.44 & 73.00 & 94.64 & 94.32 & 46.20 & 81.84 & 80.20 & 81.77 \\
    
    \midrule
        Function Vector{\small~\citep{todd2023function}}
        & 75.72 & 69.80 & 63.80 & 82.04 & 86.80 & 25.56 & 59.56 & \underline{67.40} & 66.34 \\
        
        Task Vector{\small~\citep{hendel2023context}}
        & 80.16 & \underline{81.84} & \textbf{68.68} & 84.24 & 87.08 & 31.68 & 59.64 & 66.32 & 69.95 \\
        
        State Vector{\small~\citep{li2024context}}
        & \textbf{84.28} & 80.80 & \underline{67.24} & \textbf{90.80} & 79.16 & \underline{33.48} & \underline{67.20} & 60.64 & \underline{70.45} \\
        
        I2CL{\small~\citep{li2024implicit}}
        & 76.64 & 72.44 & 62.00 & 85.20 & \textbf{90.64} & 32.12 & 61.52 & 48.48 & 66.13 \\
        
        \rowcolor{gray!15}\oursmethod{} (Ours)
        & \underline{82.20}
        & \textbf{90.04}
        & 64.44
        & \underline{89.44}
        & \underline{88.92}
        & \textbf{41.60}
        & \textbf{69.60}
        & \textbf{72.92}
        & \textbf{74.89} \\
    
\bottomrule
\end{tabular}
}
}
\end{table*}

%% file: table/comparison_50_shot.tex
\begin{table*}[t]
\vspace{5pt}
\centering
    \caption{
    Extended results of Table~\ref{tab:comparison_with_baseline} on LLaMA-3.1-8B under the number of demonstrations $k=50$ setting.
    The experimental setup is otherwise identical to Table~\ref{tab:comparison_with_baseline}.
    }
\label{tab:50shot}
\resizebox{\textwidth}{!}{
{\setlength{\tabcolsep}{12pt}
\begin{tabular}{l| c c c c c c c c| c}
\toprule
Methods
& AGNews & DBPedia & HateSpeech18 & MR & SST-2 & SST-5 & Subj & TREC & Avg. \\
\midrule
    Zero-shot
    & 75.00 & 69.00 & 60.60 & 82.20 & 86.80 & 25.60 & 59.20 & 49.40 & 63.48 \\
    
    ICL
    & 87.84 & 97.24 & 78.92 & 94.60 & 94.52 & 50.28 & 90.68 & 85.68 & 84.97 \\
    
    \midrule
    
    Function Vector{\small~\citep{todd2023function}}
    & 76.56 & 69.92 & 63.56 & 83.00 & 87.48 & 26.16 & 59.48 & 67.08 & 66.66 \\
    
    Task Vector{\small~\citep{hendel2023context}}
    & 77.60 & \underline{82.36} & \underline{65.60} & 81.72 & 88.24 & 32.24 & 59.44 & 67.60 & 69.35 \\
    
    State Vector{\small~\citep{li2024context}}
    & \underline{83.56} & 81.12 & \underline{65.60} & 84.28 & 79.04 & \underline{33.64} & 61.48 & \underline{67.72} & \underline{69.56} \\
    
    I2CL{\small~\citep{li2024implicit}}
    & 77.16 & 72.84 & 62.32 & \underline{85.52} & \textbf{90.64} & 32.44 & \underline{61.96} & 49.40 & 66.53 \\
    
    \rowcolor{gray!15}\oursmethod{} (Ours)
    & \textbf{83.92}
    & \textbf{92.44}
    & \textbf{74.52}
    & \textbf{87.92}
    & \underline{89.80}
    & \textbf{41.80}
    & \textbf{69.88}
    & \textbf{76.72}
    & \textbf{77.12} \\
\bottomrule
\end{tabular}
}
}
\end{table*}